\newtheorem{thm}{\textsc{Theorem}}
\newtheorem{prop}{\textsc{Proposition}}
\newtheorem{lem}{\textsc{Lemma}}
\theoremstyle{definition}
\newtheorem{rem}{\textsc{Remark}}
\DeclareMathOperator{\argmax}{argmax}
\newcommand{\RR}{\mathbb{R}}
\newcommand{\ZZ}{\mathbb{Z}}
\newcommand{\trn}{\mathrm{T}}
\newcommand{\ex}[2][{}]{\mathbf{E}_{#1}\left[#2\right]}
\newcommand{\brl}{\left\{}
\newcommand{\brr}{\right\}}
\newcommand{\bl}{\left(}
\newcommand{\setslc}[2]{#1^{(1:#2)}}
\newcommand{\br}{\right)}
\newcommand{\ground}{\mathcal{N}}
\newcommand{\ucb}{\mathrm{ucb}}
\newcommand{\regret}[2]{\mathrm{Reg}_{#1}\bl #2 \br}
\newcommand{\assumedbound}{\nu}
\newcommand{\lemsupnum}{$1$}
\newcommand{\algname}{AFSM-UCB}
\newcommand{\algnamei}{GM-UCB}
\newcommand{\regretmcs}{\mathrm{Reg}_{\alpha, \mathrm{MCS}}}
\newcommand{\regretlsb}{\mathrm{Reg}_{\alpha, \mathrm{LSB}}}
\newcommand{\propevent}{\mathcal{F}}
\newcommand{\myciteauthoryear}{\citet}
\newcommand{\parencite}{\citep}
\newcommand{\fx}{Fuji Xerox Co., Ltd.}
\title{Submodular Bandit Problem Under Multiple Constraints}
\author{Sho Takemori$^{*}$ \and Masahiro Sato$^{*}$ \and Takashi Sonoda$^{*}$
 \and Janmajay Singh$^{*}$ \and Tomoko Ohkuma$^{*}$}
\date{\normalsize{$^*$\fx}}
\begin{document}
\bibliographystyle{plainnat}
\maketitle

\begin{abstract}
   The \textit{linear submodular bandit problem}
   was proposed
   to simultaneously address diversified retrieval and online learning in a recommender system.
   If there is no uncertainty, this problem is equivalent to
   a submodular maximization problem under a cardinality constraint.
   However, in some situations, recommendation lists should satisfy
   additional constraints such as \textit{budget constraints},
   other than a cardinality constraint.
   Thus, motivated by diversified retrieval considering budget constraints,
   we introduce a submodular bandit problem under the intersection of
   $l$ \textit{knapsacks} and a \textit{$k$-system constraint}.
   Here $k$-system constraints form
   a very general class of constraints including
   cardinality constraints and the intersection of $k$ \textit{matroid} constraints.
   To solve this problem, we propose a non-greedy algorithm that adaptively focuses on
   a standard or modified upper-confidence bound.
   We provide a high-probability upper bound of an \textit{approximation regret},
   where the approximation ratio matches that of a fast offline algorithm.
   Moreover, we perform experiments
   under various combinations of constraints using a
   synthetic and two real-world datasets
   and demonstrate that our proposed methods outperform the existing baselines.
\end{abstract}

\section{INTRODUCTION}
The \textit{multi-armed bandit (MAB) problem}
has been widely used for practical applications.
Examples include
interactive recommender systems, Internet advertising,
portfolio selection, and clinical trials.
In a typical MAB problem, the agent selects one arm in each round. However,
in practice, it is more convenient to select more than one arm in each round.
Such a problem is called a
\textit{combinatorial bandit problem}
\parencite{chen2013combinatorial}.
For example, in \parencite{yue2011linear,radlinski2008learning},
they considered the problem where
in each round, the agent proposes multiple news articles or web documents to a user.

When recommending multiple items to a user,
agents should select \textit{well-diversified} items to maximize coverage of
the information the user finds interesting \parencite{yue2011linear}
or to reduce item similarity in the list \parencite{ziegler2005improving}.
Recommending redundant items leads to \textit{diminishing returns}
in terms of utility \parencite{yu2016linear}.
It is well-known that
properties such as diversity or diminishing returns
are well captured by \textit{submodular set functions} \parencite{krause2014submodular}.
To simultaneously address diversified retrieval and online learning in a recommender system,
\myciteauthoryear{yue2011linear} proposed a combinatorial bandit problem
(or more specifically a semi-bandit problem),
called the \textit{linear submodular bandit problem},
where in each round a sequence rewards are generated
by an unknown submodular function.

For a real-world application,
recommendation lists should satisfy several constraints.
We explain this by using a news article recommendation example.
For a comfortable user experience while selecting news articles from a recommendation list,
the length of the list should not be excessively long,
which implies that
the list should satisfy a cardinality constraint.
Furthermore, a user may not wish to spend more than a certain amount of time by reading news articles.
This can be modeled as a \textit{knapsack constraint}.
With only a knapsack constraint,
a system can recommend a long list of short (or low cost) news articles.
However, due to the space constraint of the web site, such a list cannot be displayed.
Therefore, it is necessary to consider a submodular bandit problem
under the intersection of the knapsack and cardinality constraints.

\myciteauthoryear{yue2011linear}
introduced a submodular bandit problem under a cardinality constraint
and proposed an algorithm called LSBGreedy.
Later, \myciteauthoryear{yu2016linear}
considered a submodular bandit problem under a knapsack constraint
and proposed two greedy algorithms called MCSGreedy and CGreedy.
However, such existing algorithms fail to properly optimize the objective function
under complex constraints.
In fact, we theoretically and empirically show that such simple greedy algorithms can perform poorly.

Under a simple constraint such as a cardinality or a knapsack constraint,
there is a simple rule to select elements.
This rule is called
the \textit{upper confidence bound (UCB) rule}
or the \textit{modified UCB rule}
if
the constraint is a cardinality or a knapsack constraint, respectively \parencite{yu2016linear}.
For example, with the UCB rule, the algorithm selects the element with the largest UCB
sequentially in each round.
Considering that our problem is a generalization of both the problems,
we should generalize both the rules.

In this study,
we solve the problem
under a more generalized constraint, i.e.,
the intersection of $l$ \textit{knapsacks} and \textit{$k$-system constraints}.
Here, the $k$-system constraints form a very general class of constraints, including
cardinality constraints and the intersection of $k$ \textit{matroid} constraints.
For example, when recommending news articles,
we can restrict the number of news articles from each topic with a $k$-system constraint.
To solve the problem, we propose a non-greedy algorithm that adaptively focuses on the UCB
and modified UCB rules.
Since the submodular maximization problem is NP-hard,
we theoretically evaluate our method by
an \textit{$\alpha$-approximation regret}, where $\alpha \in (0, 1)$ is an
approximation ratio.
In this study, we provide an upper bound of the $\alpha$-approximation regret in the case when
$\alpha = \frac{1}{(1 + \varepsilon)(k + 2l + 1)}$,
where $\varepsilon$ is a parameter of the algorithm.
We note that the approximation ratio matches that of an \textit{offline} algorithm
\parencite{badanidiyuru2014fast}.
To the best of our knowledge,
no known offline algorithm achieves a better approximation ratio than $\alpha$ above
and better computational complexity than the offline algorithm, simultaneously
\footnote{After we submitted this paper to the conference,
\myciteauthoryear{li2020efficient} have updated their preprint.
They proposed an offline submodular maximization algorithm and
improved the approximation ratio of \parencite{badanidiyuru2014fast} to
$1/(k + \frac{7}{4}l + 1) - \varepsilon$.
}.
\label{arxivfootnote}
More precisely, our contributions are stated as follows:

\subsection*{OUR CONTRIBUTIONS}
\begin{enumerate}
   \item We propose a submodular bandit problem with semi-bandit feedback under the intersection of
   $l$ knapsacks and $k$-system constraints (Section \ref{sec:prob-form}).
   This is the first attempt to solve the submodular bandit problem under such complex constraints.
   The problem is new even when the $k$-system constraint is a cardinality constraint.

   \item We propose a novel algorithm called
   \textit{\algname{}}
   that Adaptively Focuses on a Standard or Modified Upper Confidence Bound (Section \ref{sec:algo}).

   \item We provide a high-probability upper bound
   of an approximation regret for \algname{} (Section \ref{sec:main}).
   We prove that the $\alpha$-approximation regret
   $\regret{\alpha}{T}$ is given by $O(\sqrt{mT}\ln( mT/\delta ))$
   with probability in least $1 - \delta$
   and the computational complexity in each round is given as
   $O(m|\ground| \ln |\ground|/ \ln (1 + \varepsilon))$,
   where $\alpha = \frac{1}{(1 + \varepsilon)(k + 2l + 1)}$,
   $\varepsilon$ is a parameter of the algorithm,
   $T$ is the time horizon,
   $m$ is the cardinality of a maximal feasible solution, and
   $\ground$ is the ground set (e.g., the set of all news articles in the news recommendation example).
   We note that no known offline fast\footnote{We refer to Section \ref{sec:related-submod} for the meaning of ``fast''.} algorithm
   achieves a better approximation ratio than above\footnote{See footnote in page \pageref{arxivfootnote}.}.

   \item
   We empirically prove the effectiveness of our proposed method
   by comprehensively evaluating it on a synthetic and two real-world datasets.
   We show that our proposed method outperforms the existing greedy baselines such as LSBGreedy and CGreedy.
\end{enumerate}

\section{RELATED WORK}
\label{sec:related}
\subsection{SUBMODULAR MAXIMIZATION}
\label{sec:related-submod}
Although submodular maximization has been studied over four decades,
we introduce only recent results relevant to our work.
\myciteauthoryear{badanidiyuru2014fast} provided a maximization algorithm
for a non-negative, monotone submodular function with $l$ knapsack constraints
and a $k$-system constraint that achieves $\frac{1}{(1 + \varepsilon)(k + 2l + 1)}$-approximation solution.
Based on this work and \myciteauthoryear{gupta2010constrained},
\myciteauthoryear{mirzasoleiman2016fast} proposed a maximization algorithm called FANTOM
under the same constraint
in the case when the objective function is not necessarily monotone.
Our proposed method is inspired by these two offline algorithms.
However, because of uncertainty due to semi-bandit feedback,
we need a nontrivial modification.
A key feature of our method and aforementioned two offline algorithms
is that they filter out ``bad'' elements via a threshold.
Such a threshold method is also used for other problem settings
such as streaming submodular maximization under a cardinality constraint
\parencite{badanidiyuru2014streaming}.
Some algorithms \parencite{sarpatwar2019constrained,chekuri2010dependent,chekuri2014submodular}
achieves better approximation ratios than that of \parencite{badanidiyuru2014fast}
under narrower classes of constraints (e.g., a matroid + $l$ knapsacks).
However, these algorithms are not ``fast'' because
their computational complexity is $O(\mathrm{poly}(|\ground|))$ with a polynomial of high degree,
while that of \parencite{badanidiyuru2014fast} is $O(\frac{|\ground|}{\varepsilon^2}
\ln^2 \frac{|\ground|}{\varepsilon})$.
For example, the computational complexity of the algorithm provided in \parencite{sarpatwar2019constrained}
is $\widetilde{O}(|\ground|^6)$ when $k = 1$.
We refer to \parencite{sarpatwar2019constrained,mirzasoleiman2016fast} for further comparison with respect
to an approximation ratio and computational complexity.

\subsection{SUBMODULAR BANDIT PROBLEMS}
\myciteauthoryear{yue2011linear} introduced the linear submodular bandit problem
to solve a diversification problem in a retrieval system
and proposed a greedy algorithm called LSBGreedy.
Later, \myciteauthoryear{yu2016linear} considered a variant of the problem, that is,
the linear submodular bandit problem with a knapsack constraint
and proposed two greedy algorithms called MCSGreedy and CGreedy.
\myciteauthoryear{chen2017interactive} generalized the linear submodular bandit problem
to an infinite dimensional case, i.e.,
in the case where the marginal gain of the score function belongs to a \textit{reproducing kernel Hilbert space
(RKHS)}
and has a bounded norm in the space.
Then, they proposed a greedy algorithm called SM-UCB.
Recently, \myciteauthoryear{hiranandanicascading} studied a model combining
linear submodular bandits with a \textit{cascading model}
\parencite{craswell2008experimental}.
Strictly speaking, their objective function is not a submodular function.
Table \ref{tab:comparison} shows a comparison with other submodular bandit problems
with respect to constraints.

\begin{table}[ht]
   \label{tab:comparison}
   \centering
   \caption{Comparison of other submodular bandit algorithms with respect to constraints.}
   \begin{tabular}{cccc}
      Methods & Cardinality & Knapsack & $k$-system\\
      \hline \hline
      LSBGreedy & \checkmark & & \\
      \hline
      CGreedy & & \checkmark & \\
      \hline
      SM-UCB & \checkmark & & \\
      \hline
      Our method & \checkmark & \checkmark & \checkmark
   \end{tabular}
\end{table}

\section{DEFINITION}
\label{sec:def}
In this section, we provide definitions of terminology used in this paper.
Throughout this paper,
we fix a finite set $\ground$ called a ground set
that represents the set of the entire news articles in the news article recommendation example.
\subsection{SUBMODULAR FUNCTION}
In this subsection, we define submodular functions.
We refer to \parencite{krause2014submodular} for an introduction to this subject.

We denote by $2^{\ground}$ the set of subsets of $\ground$.
For $e \in \ground$ and $S \subseteq \ground$, we write $S + e = S \cup \brl e \brr$.
Let $f: 2^{\ground} \rightarrow \RR$ be a set function.
 We call $f$ a \textit{submodular function}
 if $f$ satisfies
 \begin{math}
     \Delta f(e|A) \ge \Delta f(e|B)
 \end{math}
 for any $A, B \in 2^{\ground}$ with $A \subseteq B$ and
 for any $e \in \ground \setminus B$.
 Here, $\Delta f(e | A)$ is the marginal gain when $e$ is added to $A$ and defined as $f(A + e) - f(A)$.
 We note that a linear combination of submodular functions with non-negative coefficients is also submodular.
A submodular function $f$ on $2^{\ground}$ is called monotone if
 $f(B) \ge f(A)$ for any $A, B \in 2^{\ground}$ with $A \subseteq B$.
A set function $f$ on $2^{\ground}$ is called non-negative if $f(S) \ge 0$ for any
$S \subseteq \ground$.
Although non-monotone submodular functions have important applications \parencite{mirzasoleiman2016fast},
we consider only non-negative, monotone submodular functions in this study
as in the preceding work \parencite{yue2011linear,yu2016linear,chen2017interactive}.

Many useful and interesting functions satisfy submodularity.
Examples include coverage functions,
probabilistic coverage functions \parencite{el2009turning},
entropy and mutual information \parencite{krause2012near} under an assumption
and ROUGE \parencite{lin2011class}.

\subsection{MATROID, $k$-SYSTEM, AND KNAPSACK CONSTRAINTS}
For succinctness, we omit formal definitions of
the matroid and $k$-system.
Instead, we introduce examples of matroids and remark that
the intersection of $k$ matroids is a $k$-system.
For definitions of these notions, we refer to \parencite{calinescu2011maximizing}.

First, we provide an important example of a matroid.
Let $\ground_{i} \subseteq \ground$ ($i=1,\dots, n$) be a partition of $\ground$, that is $\ground$
is the disjoint union of these subsets.
For $1 \le i \le n$, we fix a non-negative integer $d_i$ and let
$\mathcal{P} = \brl S \in 2^{\ground} \mid |S \cap \ground_i| \le d_i,\ \forall i\brr$.
Then, the pair $(\ground, \mathcal{P})$ is an example of
a matroid and called a \textit{partition matroid}.
Let $d$ be a non-negative integer and put
$\mathcal{U} = \brl S \in 2^{\ground} \mid |S| \le d \brr$.
Then $(\ground, \mathcal{U})$ is a special case of partition matroids
and called a \textit{uniform matroid}.
Let $(\ground, \mathcal{M}_i)$ for $1 \le i \le k$ be $k$ matroids, where $\mathcal{M}_i \subseteq 2^\ground$.
The intersection of matroids $(\ground, \cap_{i=1}^{k} \mathcal{M}_i)$
is not necessarily a matroid but a \textit{$k$-system}
(or more specifically it is a $k$-extendible system)
\parencite{calinescu2011maximizing,mestre2006greedy,mestre2015intersection}.
In particular, any matroid is a $1$-system.
For a $k$-system $(\ground, \mathcal{I})$ with $\mathcal{I} \subseteq 2^{\ground}$
and a subset $S \subseteq \ground$,
we say that $S$ satisfies the $k$-system constraint if and only
if $S \in \mathcal{I}$.
Trivially, a uniform matroid constraint is equivalent to
a cardinality constraint.

Next, we provide a definition of knapsack constraint.
Let $c: \ground \rightarrow \RR_{> 0}$ be a function.
For $e \in \ground$, we suppose $c(e)$ represents the cost of $e$.
Let $b \in \RR_{> 0}$ be a budget and $S \subseteq \ground$ a subset.
We say that $S$ satisfies the \textit{knapsack constraint}
 with the budget $b$
if $c(S):= \sum_{e \in S}c(e) \le b$. Without loss of generality,
it is sufficient to consider the unit budget case, i.e., $b=1$.

\section{PROBLEM FORMULATION} \label{sec:prob-form}
Throughout this paper, we consider the following intersection of
$l$ knapsacks and $k$-system constraints:
\begin{equation}
   \label{eq:the-constraints}
   c_{j}(S)  \le 1 \ (1 \le \forall j \le l)
   \quad
   \text{ and }
   S \in \mathcal{I}
\end{equation}
Here for $1 \le j \le l$, $c_j: \ground \rightarrow \RR_{> 0}$ is a cost and $(\ground, \mathcal{I})$ is a
$k$-system.

In this study, we consider the following sequential decision-making process for times steps $t = 1, \dots, T$.

(i) The algorithm
selects a list $S_{t} = \brl e^{(1)}_{t},
\allowbreak
\dots, \allowbreak e^{(m_t)}_{t} \brr \subseteq \ground$ satisfying
the constraints \eqref{eq:the-constraints}.

(ii) The algorithm receives noisy rewards $y_{t}^{(1)}, \dots, y_{t}^{(m_t)}$ as follows:
\begin{equation*}
   y_{t}^{(i)} = \Delta f\bl e_{t}^{(i)} \mid \setslc{S_t}{i-1} \br + \varepsilon_{t}^{(i)},
   \text{ for } i = 1, \dots, m_t,
\end{equation*}
Here $f$ is a submodular function \textit{unknown} to the algorithm,
$\setslc{S_t}{i - 1} = \brl e_{t}^{(1)}, \allowbreak \dots, \allowbreak e_{t}^{(i - 1)} \brr$ and $\varepsilon_{t}^{(i)}$ is a noise.
We regard $\setslc{S_t}{i-1}$, $e_{t}^{(i -1)}$ and $\varepsilon_{t}^{(i)}$ as random variables.
The objective of the algorithm is to maximize the sum of rewards $\sum_{t=1}^{T}f(S_t)$.

Following \parencite{yue2011linear},
we explain this problem by using the news article recommendation example.
In each round, the user scans the list of the recommended items
$S_t = \brl e_t^{(1)}, \dots, e_t^{(m_t)} \brr$ one-by-one in top-down fashion,
where $m_t$ is the cardinality of $S_t$ at round $t$.
We assume that the marginal gain $\Delta f( e_{t}^{(i)} \mid \setslc{S_t}{i-1})$
represents the new information covered by $e_t^{(i)}$ and not covered by $\setslc{S_t}{i - 1}$.
The noisy rewards $y_{t}^{(1)}, \dots, y_{t}^{(m_t)}$ are binary random variables
and the user likes $e_t^{(i)}$ with probability
$\Delta f( e_{t}^{(i)} \mid \setslc{S_t}{i-1})$.

\subsection{ASSUMPTIONS ON THE SCORE FUNCTION $f$}
Following \parencite{yue2011linear},
we assume that there exist $d$ known submodular functions
$f_1, \dots, f_d$ on $2^{\ground}$ that are linearly independent and
the objective submodular function $f$ can be written as
a linear combination $f = \sum_{i=1}^d w_i f_i$,
where the coefficients $w_1, \dots, w_d$ are non-negative and \textit{unknown} to the algorithm.
We fix a parameter $B > 0$
and assume that $\sqrt{\sum_{i=1}^d w_i^2} \le B$.
We also assume that
for some $A > 0$, the $L^2$-norm of vector
$[\Delta f_i(e \mid S)]_{i=1}^{d}$ is bounded above by $\sqrt{A}$
for any $e \in \ground$ and $S \in 2^{\ground}$.

We note that this can be generalized to an infinite dimensional case
as in \parencite{chen2017interactive}.
We discuss this setting more in detail in the supplemental material and
provide a theoretical result in this setting.

\subsection{ASSUMPTIONS ON NOISE STOCHASTIC PROCESS}
We assume that there exists $m \in \ZZ_{>0}$ such that $m_t \le m$ for all $t$
and  consider the lexicographic order on the set
$\brl (t, i) \mid t = 1, \allowbreak \dots,
\allowbreak\ 1 \le i \le m \brr$, i.e.,
$(t, i) \le (t', i')$ if and only if either $t < t'$
or $t = t'$ and $i \le i'$.
Then, we can identify the set with the set of natural numbers (as ordered sets)
and can regard $\{ \varepsilon_{t}^{(i)}\}_{t, i}$ as a sequence.
We assume that the stochastic process
$\brl \varepsilon_{t}^{(i)}\brr_{t, i}$
is \textit{conditionally $R$-sub-Gaussian}
  for a fixed constant $R \ge 0$, i.e.,
\begin{math}
   \ex{\exp \bl \xi \varepsilon_{t}^{(i)}\br \mid \mathcal{F}_{t, i} } \le
   \exp\bl \frac{\xi^{2} R^{2}}{2} \br,
\end{math}
for any $(t, i)$ and any $\xi \in \RR$.
Here, $\mathcal{F}_{t, i}$ is the $\sigma$-algebra generated by
$\left\{
   \setslc{S_u}{j} \mid (u, j) < (t, i)
\right\}
$ and
$\left\{ \varepsilon_{u}^{(j)} \mid  (u, j) < (t, i) \right\}$.
This is a standard assumption on the noise sequence
\parencite{chowdhury2017kernelized,abbasi2011improved}.
For example, if $\{ \varepsilon_{t}^{(i)}\}$
is a martingale difference sequence and $|\varepsilon_{t}^{(i)}| \le R$
or
$\{ \varepsilon_{t}^{(i)}\}$
is conditionally Gaussian with zero mean and variance $R^2$, then
the condition is satisfied \parencite{lattimore2019book}.

\subsection{APPROXIMATION REGRET}
\label{sec:alpha-regret}
As usual in the combinatorial bandit problem,
we evaluate bandit algorithms by a regret called
\textit{$\alpha$-approximation regret} (or
\textit{$\alpha$-regret} in short),
where $\alpha \in (0, 1)$. The approximation regret is necessary for meaningful evaluation.
Even if the submodular function $f$ is completely known,
it has been proved that no algorithm can achieve the optimal solution
by evaluating $f$ in polynomial time \parencite{nemhauser1978best}.

We denote by $OPT$ the optimal solution, i.e.,
\begin{math}
   OPT = \argmax_{S} f(S),
\end{math}
where $S$ runs over $2^{\ground}$ satisfying the constraint \eqref{eq:the-constraints}.
We define the $\alpha$-regret as follows:
\begin{equation*}
   \regret{\alpha}{T} = \sum_{t=1}^{T} \brl \alpha f(OPT) - f(S_t) \brr.
\end{equation*}
This definition
is slightly different from that given in \parencite{yue2011linear}
because our definition does not include noise as in \parencite{chowdhury2017kernelized}.
In either case, one can prove a similar upper bound.
For the proof in the cardinality constraint case,
we refer to Lemma 4 in the supplemental material of \parencite{yue2011linear}.

In this study, we take the same approximation ratio $\alpha = \frac{1}{(1 + \varepsilon)(k + 2l + 1)}$
as that of a fast algorithm in the offline setting \parencite[Theorem 6.1]{badanidiyuru2014fast}.
As mentioned in Section \ref{sec:related},
there exist offline algorithms that achieve better approximation ratios than above,
but they have high computational complexity.
Later, we remark that our proposed method is also ``fast''.

\section{ALGORITHM}
\label{sec:algo}
In this section, following \parencite{yue2011linear,yu2016linear}, we first define a
\textit{UCB score} of the marginal gain $\Delta f(e \mid S)$
and introduce a \textit{modified UCB score}.
With a UCB score, one can balance the exploitation and exploration tradeoff with bandit feedback.
Then, we propose a non-greedy algorithm (Algorithm \ref{algo:main}) that adaptively focuses on
the UCB score and modified UCB score.
\subsection{UCB SCORES}
For $e \in \ground$ and $S \in 2^{\ground}$, we define a column vector $x(e \mid S)$
by $\bl \Delta f_i(e \mid S) \br_{i=1}^{d} \in \RR^d$ and put $x_{t}^{(i)} =
x \bl e^{(i)}_{t} \mid \setslc{S_t}{i-1} \br$.
Here, we use the same notation as in Section \ref{sec:prob-form}.
We define $b_t, w_t \in \RR^{d}$ and $M_t \in \RR^{d\times d}$ as follows:
\begin{align*}
   b_t &:= \sum_{s=1}^{t}\sum_{i=1}^{m_s} y_s^{(i)} x_{s}^{(i)}, &\\
   M_t &:= \lambda I + \sum_{s=1}^{t}\sum_{i=1}^{m_s}x_{s}^{(i)}\otimes x_{s}^{(i)},
   &w_t := M_t^{-1}b_t,
\end{align*}
Here,
$\lambda > 0$ is a parameter of the model
and for a column vector $x \in \RR^{d}$, we denote by $x \otimes x \in \RR^{d \times d}$
the Kronecker product of $x$ and $x$.

For $e \in \ground$ and $S \in 2^{\ground}$, we define
\begin{math}
   \mu(e \mid S) := w_t \cdot x(e \mid S)
\end{math}
and
\begin{math}
\sigma(e \mid S) := \sqrt{x(e | S)^{\trn} \hspace{2pt} M_{t}^{-1} \hspace{2pt} x(e | S)}.
\end{math}
Then, we define a UCB score of the marginal gain by
\begin{equation*}
   \ucb_t(e \mid S) =
   \mu_{t-1}(e \mid S) + \beta_{t-1} \sigma_{t - 1}(e \mid S),
\end{equation*}
and a modified UCB score by $\ucb_t(e \mid S)/c(e)$.
Here,
\begin{equation*}
    \beta_t := B + R \sqrt{\ln\det \bl \lambda^{-1} M_t \br + 2 + 2\ln(1/\delta)},
\end{equation*}
and $c(e) := \sum_{j=1}^{l}c_j(e)$.
It is well-known that
$\ucb_t(e \mid \phi, S)$ is an upper confidence bound for
$\Delta f_{\phi}(e \mid S)$.
More precisely, we have the following result.
\begin{prop}
   \label{prop-ucb}
   We assume there exists $m \in \ZZ_{\ge 1}$ such that $m_t \le m$ for all $1\ \le t \le T$.
   We also assume that $\lambda \ge L$.
   Then, with probability at least $1-\delta$, the following inequality holds:
   \begin{equation*}
      \left| \mu_{t-1}(e \mid S) - \Delta f(e \mid S) \right| \le
      \beta_{t-1} \sigma_{t - 1}(e \mid S),
   \end{equation*}
   for any $t, S,$ and $e$.
\end{prop}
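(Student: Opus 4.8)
The plan is to read the model as a linear bandit and reduce everything to a self-normalized confidence ellipsoid for the ridge estimator. Since $f = \sum_{i=1}^d w_i f_i$, marginal gains are linear in the unknown weight vector $w = (w_i)_{i=1}^d$: we have $\Delta f(e \mid S) = w \cdot x(e \mid S)$ and $y_s^{(i)} = w \cdot x_s^{(i)} + \varepsilon_s^{(i)}$ with $x_s^{(i)} = x(e_s^{(i)} \mid \setslc{S_s}{i-1})$, so that $w_t = M_t^{-1} b_t$ is exactly the $\ell^2$-regularized least-squares estimate of $w$. I would first strip off the dependence on $e$ and $S$ by Cauchy--Schwarz in the inner product induced by $M_{t-1}$:
\[
\left| \mu_{t-1}(e \mid S) - \Delta f(e \mid S) \right|
= \left| (w_{t-1}-w)\cdot x(e \mid S) \right|
\le \| w_{t-1}-w \|_{M_{t-1}} \, \sigma_{t-1}(e \mid S).
\]
The point of this step is that it is deterministic once $w_{t-1}$ is fixed, so the bound holds simultaneously for \emph{all} $e \in \ground$ and $S \in 2^{\ground}$ with no union bound; it therefore suffices to establish the ellipsoid bound $\| w_{t-1}-w \|_{M_{t-1}} \le \beta_{t-1}$ holding uniformly over $t$ with probability at least $1-\delta$.

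Next I would expand the estimation error. Writing $Z_t := \sum_{s=1}^t \sum_{i=1}^{m_s} \varepsilon_s^{(i)} x_s^{(i)}$ and using $M_t - \lambda I = \sum_{s,i} x_s^{(i)} \otimes x_s^{(i)}$, a direct computation gives
\[
w_t - w = -\lambda M_t^{-1} w + M_t^{-1} Z_t,
\qquad
\| w_t - w \|_{M_t} \le \lambda \| w \|_{M_t^{-1}} + \| Z_t \|_{M_t^{-1}}.
\]
The first (bias) term is elementary: since $M_t \succeq \lambda I$ we get $\lambda \| w \|_{M_t^{-1}} \le \sqrt{\lambda}\,\| w \| \le \sqrt{\lambda}\,B$, and this is where the normalization $\lambda \ge L$ enters, playing the role of the bounded-feature (bounded-kernel) assumption so that this contribution is absorbed into the leading $B$ of $\beta_{t-1}$ exactly as in the kernelized analysis of \myciteauthoryear{chowdhury2017kernelized}.

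The crux is the self-normalized martingale term $\| Z_t \|_{M_t^{-1}}$, and this is where I expect the real work to lie. The increments $\{\varepsilon_s^{(i)}\}$ are indexed by pairs $(s,i)$, which the problem formulation has already arranged into a single lexicographic order with filtration $\mathcal{F}_{s,i}$. I would first verify the two structural facts that make the standard machinery applicable: each $\varepsilon_s^{(i)}$ is conditionally $R$-sub-Gaussian given $\mathcal{F}_{s,i}$ (by assumption), and the regressor $x_s^{(i)} = x(e_s^{(i)} \mid \setslc{S_s}{i-1})$ is $\mathcal{F}_{s,i}$-measurable, i.e.\ \emph{predictable}, because $e_s^{(i)}$ and $\setslc{S_s}{i-1}$ are determined before $\varepsilon_s^{(i)}$ is revealed. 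Given this, I would invoke the self-normalized maximal inequality of \myciteauthoryear{abbasi2011improved} (the method of mixtures: mix the exponential supermartingale generated by the noise against a Gaussian prior on the direction, then apply a stopping-time argument) to obtain, uniformly in $t$ and with probability at least $1-\delta$,
\[
\| Z_t \|_{M_t^{-1}}^2
\le 2 R^2 \ln \frac{ \det(M_t)^{1/2} \det(\lambda I)^{-1/2} }{ \delta }
= R^2 \left( \ln \det(\lambda^{-1} M_t) + 2\ln(1/\delta) \right).
\]
Taking square roots and comparing with the definition of $\beta_{t-1}$ (the extra $+2$ inside the root is harmless slack) gives $\| Z_{t-1} \|_{M_{t-1}^{-1}} \le R\sqrt{\ln\det(\lambda^{-1}M_{t-1}) + 2 + 2\ln(1/\delta)}$, and adding the bias term yields $\| w_{t-1}-w \|_{M_{t-1}} \le \beta_{t-1}$; combined with the Cauchy--Schwarz step this proves the proposition.

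The main obstacle is thus not the algebra but the careful setup of the \emph{doubly indexed} filtration: one must check that interleaving the within-round index $i$ with the round index $t$ still yields a genuine martingale-difference structure with predictable regressors, so that the Abbasi-Yadkori/Chowdhury--Gopalan self-normalized bound applies verbatim. Everything downstream is a routine specialization of their linear/kernelized confidence-ellipsoid argument.
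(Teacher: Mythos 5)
Your overall route is exactly the one the paper takes: the paper's ``proof'' is a one-line deferral to \parencite[Theorem 2]{chowdhury2017kernelized} (equivalently \parencite{abbasi2011improved} in the linear case), and what you have written out --- Cauchy--Schwarz in the $M_{t-1}$-inner product to get uniformity over $(e,S)$ for free, the split $w_t - w = -\lambda M_t^{-1}w + M_t^{-1}Z_t$, and the self-normalized method-of-mixtures bound on $\|Z_t\|_{M_t^{-1}}$ over the lexicographically ordered filtration with predictable regressors --- is precisely the content of that citation. The martingale part is fine: your constant $R\sqrt{\ln\det(\lambda^{-1}M_t)+2\ln(1/\delta)}$ matches $\beta_t$ up to the harmless $+2$, and your observation that $e_t^{(i)}$ is determined by round-$(t-1)$ information (so $x_t^{(i)}$ is $\mathcal{F}_{t,i}$-measurable) is the right thing to check.

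The one step that does not close as written is the bias term. You correctly derive $\lambda\|w\|_{M_t^{-1}} \le \sqrt{\lambda}\,B$, but then assert this is ``absorbed into the leading $B$ of $\beta_{t-1}$'': that requires $\lambda \le 1$, whereas the hypothesis is $\lambda \ge L$ and Theorem \ref{thm:main} further assumes $\lambda \ge m \ge 1$, so the inequality you need goes the wrong way. You have also misidentified what $\lambda \ge L$ is for: it does not help the bias at all (it hurts it); it is there so that $\sigma_{t-1}^2(e \mid S) = x^{\trn}M_{t-1}^{-1}x \le \|x\|^2/\lambda \le 1$, which is what the downstream sum-of-variances lemma (Lemma \ref{lem:sumsigma} via \parencite[Lemma 5]{yue2011linear}) needs. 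The reason \myciteauthoryear{chowdhury2017kernelized} get a clean $B$ is that their posterior standard deviation is $\kappa(x,x)-\kappa_t(x)^{\trn}(K_t+\lambda I)^{-1}\kappa_t(x)$, which for the linear kernel equals $\lambda\, x^{\trn}M_t^{-1}x$, i.e.\ it carries an extra $\sqrt{\lambda}$ relative to the $\sigma_{t-1}$ defined in this paper's main text; with that normalization the bias is $\le B\sigma_{t-1}$ for every $\lambda$. So with the main-text $\sigma$ your argument proves the bound with $\sqrt{\lambda}B + R\sqrt{\cdots}$ in place of $\beta_{t-1}$ (this normalization wrinkle is arguably present in the paper itself), but you should either carry the $\sqrt{\lambda}$ honestly or switch to the rescaled $\sigma$ --- not claim it vanishes.
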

Proposition \ref{prop-ucb} follows from the proof of \parencite[Theorem 2]{chowdhury2017kernelized}.
We note that this theorem is a more generalized result than the statement above
(they do not assume that the objective function is linear but belongs to an RKHS).
In the linear kernel case, an equivalent result to Proposition \ref{prop-ucb}
 was proved in
\parencite{abbasi2011improved}.

We also define the UCB score for a list $S \allowbreak=\brl e^{(1)}, \dots,e^{(m)}  \brr$ by
\begin{math}
   \ucb_t(S)  \allowbreak= \mu_{t - 1}(S ) + 3 \beta_{t - 1} \sigma_{t - 1}(S ).
\end{math}
Here $\mu_{t}(S)$ and $\sigma_{t}(S)$ are defined as
\begin{math}
    \sum_{i=1}^{m} \mu_{t}(e^{(i)} \mid \setslc{S}{i - 1})
\end{math}
and
\begin{math}
    \sum_{i = 1}^{m}\sigma_t(e^{(i)} \mid \setslc{S}{i - 1}),
\end{math}
respectively.
The factor $3$ in the definition of $\ucb_t(S)$ is due to a technical reason
as clarified by the proof of Lemma \lemsupnum{} in the supplemental material.

\subsection{\algname}
\begin{algorithm}[h]
   \label{algo:sub}
   \SetKwInOut{Input}{Input}
   \SetKwInOut{Output}{Output}
   \DontPrintSemicolon
   \Input{%
      Threshold $\rho$,
      round $t$
   }
   \Output{A list $S$ satisfying the constraints \eqref{eq:the-constraints}}
   \caption{\algnamei{} (Sub-algorithm)}
   Set $S = \emptyset$, $i = 1$\;
   \While{$\mathrm{True}$}{
      $\ground_{S} = \brl e \in \ground \mid S + e
      \text{ satisfies the constraint \eqref{eq:the-constraints}}
       \brr$.
       \;
      $\ground_{S, \ge \rho} = \brl
      e \in \ground_{S} \mid
      \begin{subarray}{l}
      \ucb_{t}(e \mid S)/c(e) \ge \rho \text{ and}\\
      \ucb_{t}(e \mid \emptyset)/c(e) \ge \rho
      \end{subarray}
      \brr$.\;
      \uIf{$\ground_{S, \ge \rho} =  \emptyset$}{
         break;
      }
      $e_{i} = \argmax_{e \in \ground_{S, \ge \rho}} \ucb_t(e \mid S) $.\;
      Add $e_{i}$ to $S$. Set $i \gets i + 1$\;
   }
   Return $S$\;
\end{algorithm}

\begin{algorithm}[h]
   \label{algo:main}
   \DontPrintSemicolon
   \caption{\algname{} (Main Algorithm)}
   \SetKwInOut{Input}{Input}
   \SetKwInOut{Output}{Output}
   \Input{Parameters $B, R, \lambda, \delta, \assumedbound, \assumedbound', \varepsilon$}
   \Output{A list $S$ satisfying the constraints \eqref{eq:the-constraints}}
   \For{$t=1,\dots, T$}{
      $U = \emptyset$,
      $r = \frac{2}{k + 2l + 1}$,
      $\rho = r(1 + \varepsilon)^{-1}\assumedbound$\;
      \While{$\rho \le r\assumedbound' |\ground| $}{
         $S = \mathrm{Algorithm1}(\rho, t)$\;
         Add $S$ to $U$\;
         Set $\rho \gets (1 + \epsilon) \rho$\;
      }
   Select $S_t = \argmax_{S \in U} \ucb_{t}(S)$\;
   Receive rewards $y_t^{(1)}, \dots , y_t^{(m_t)}$\;
   }
\end{algorithm}

In this subsection, we propose a UCB-type algorithm for our problem.
We call our proposed method \algname{} and its pseudo code is outlined in
Algorithm \ref{algo:main}.
Algorithm \ref{algo:main} calls a sub-algorithm called \algnamei{} (an algorithm
 that Greedily selects elements with Modified UCB scores larger than a threshold,
 outlined in Algorithm \ref{algo:sub}).
Algorithm \ref{algo:sub} takes a threshold $\rho$ as a parameter and returns a list of elements
satisfying the constraint \ref{eq:the-constraints}.
Algorithm \ref{algo:sub} selects elements greedily
from the elements whose modified UCB scores $\ucb_t (e \mid S)/c(e)$
and $\ucb_t(e \mid \emptyset)/c(e)$ are larger or equal to the threshold $\rho$.
If the threshold $\rho$ is small, then this algorithm is almost the same
as a greedy algorithm, such as LSBGreedy \parencite{yue2011linear}.
If the threshold $\rho$ is large, then the elements with large modified UCB scores will be selected.
Thus, the threshold $\rho$ controls the importance of the standard and modified scores.
The main algorithm \ref{algo:main}
calls Algorithm \ref{algo:sub} repeatedly by changing the threshold $\rho$
and returns a list with the largest UCB score.
We prove that there exists a good list among these candidates lists.

As remarked before, Algorithm \ref{algo:main} is inspired by
submodular maximization algorithms in the
the offline setting \parencite{badanidiyuru2014fast,mirzasoleiman2016fast}.
However, we need a nontrivial modification since
the diminishing return property does not hold for $\ucb_t(e \mid S)$
unlike the marginal gain $\Delta f(e \mid S)$.
We note that $\ucb_t(e \mid S)$ can be large not only when
the estimated value of $\Delta f(e \mid S)$ is large
but also if the uncertainty in adding $e$ to $S$ is high.
Therefore, we need additional filter conditions to ensure
that $e$ is a ``good'' element.
Natural candidates for the condition are that
$\ucb_t(e \mid \setslc{S}{i})/c(e) \ge \rho$ for some indices $i$.
In Algorithm \ref{algo:main}, we require $\ucb_t(e \mid \emptyset)/c(e) \ge \rho$
in addition to $\ucb_t(e \mid S) / c(e) \ge \rho$.

In the algorithm, we introduce parameters $\assumedbound$ and $\assumedbound'$.
The parameter $\assumedbound$ (resp. $\assumedbound'$) is used for defining the initial
(resp. terminal) value of the threshold $\rho$.
In the next section, for a theoretical guarantee, we assume that
\begin{math}
   \assumedbound \le \max_{e \in \ground} f(\brl e \brr) \le \assumedbound'.
\end{math}
If the upper bound of the reward is known, then we can take
$\nu'$ as the known upper bound.
In practice, it is plausible that most users are interested in at least one item in
the entire item set $\ground$, which implies $\max_{e \in \ground} f(\brl e \brr)$
is not very small.
In addition, the number of iterations in the while loop in Algorithm \ref{algo:main} is
given by $O(\ln \bl \assumedbound' |\ground| / \assumedbound \br)$. Therefore,
taking a very small $\assumedbound$ does not increase the number of iterations as much.

\subsection{COMPUTATIONAL COMPLEXITY}
\label{sec:comp-coplx}
We discuss the computational complexity of Algorithm \ref{algo:main}
and that of existing methods.
We consider a greedy algorithm by applying LSBGreedy
to our problem; i.e., we consider a greedy algorithm
that selects the element
with the largest UCB score until the constraint is satisfied.
By abuse of terminology, we call this algorithm LSBGreedy.
Similarly, when we apply CGreedy (resp. MCSGreedy) to our problem,
we also call this algorithm CGreedy (resp. MCSGreedy).
In each round, the expected number of times to compute $\ucb_{t}(e \mid S)$
in Algorithm \ref{algo:main} is given by
$O(m |\ground| \ln (\nu' |\ground|/\nu)/\ln(1 + \epsilon))$,
while that of LSBGreedy is given by $O(m |\ground|)$.
The computational complexity of MCSGreedy and CGreedy is given as $O(|\ground|^3)$ and $O(m|\ground|)$
respectively. Therefore,
ignoring unimportant parameters,
our algorithms incur an additional factor $\ln |\ground|/ \ln (1 + \varepsilon)$
compared to that of LSBGreedy and CGreedy.

\section{MAIN RESULTS}
\label{sec:main}
The main challenge of this paper is to provide a strong theoretical result for
\algname{}.
In this section, under the assumptions stated as in the previous section,
we provide an upper bound for the approximation regret of \algname{}
and give a sketch of the proof.
We also show that existing greedy methods incur linear approximation regret in the worst
case for our problem.
\subsection{STATEMENT OF THE MAIN RESULTS}
\begin{thm}
   \label{thm:main}
   Let the notation and assumptions be as previously mentioned.
   We also assume that $\lambda \ge m$.
   We let $\alpha = \frac{1}{(1+\varepsilon) \bl k + 2l + 1\br}$.
   Then, with probability at least $1-\delta$,
   the proposed algorithm achieves the following $\alpha$-regret bound:
   \begin{equation*}
      \regret{\alpha}{T}
      \le 8 A\beta_{T} \sqrt{2 mT \ln \det\bl \lambda^{-1}M_T \br}.
   \end{equation*}
   In particular, ignoring $A, B, R$, we have
   $\regret{\alpha}{T} = O(d\sqrt{mT} \ln \frac{mT}{\delta})$ with probability at least $1 - \delta$.
\end{thm}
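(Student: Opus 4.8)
The plan is to condition on the high-probability event of Proposition~\ref{prop-ucb}, on which every $\mu_{t-1}(e\mid S)$ lies within $\beta_{t-1}\sigma_{t-1}(e\mid S)$ of the true marginal gain $\Delta f(e\mid S)$, and then to split the per-round gap $\alpha f(OPT)-f(S_t)$ into an \emph{optimism} part and a \emph{confidence-width} part. Concretely, I would first establish the optimism inequality $\ucb_t(S_t)\ge\alpha f(OPT)$, and then the reverse-direction bound $\ucb_t(S_t)\le f(S_t)+4\beta_{t-1}\sigma_{t-1}(S_t)$; subtracting these gives $\alpha f(OPT)-f(S_t)\le 4\beta_{t-1}\sigma_{t-1}(S_t)$ in every round, after which the proof reduces to a deterministic bound on $\sum_t\sigma_{t-1}(S_t)$.

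The hard step is the optimism inequality $\ucb_t(S_t)\ge\alpha f(OPT)$, which is the content of Lemma~\lemsupnum{} in the supplemental material. Here I would transport the offline threshold-greedy analysis of \parencite{badanidiyuru2014fast} to the UCB surrogate. Summing Proposition~\ref{prop-ucb} in the optimistic direction along any ordering of $OPT$ shows $\ucb_t(OPT)\ge f(OPT)$, so the list score $\ucb_t(\cdot)$ dominates $f$; consequently the density-threshold filtering in Algorithm~\ref{algo:sub} plays the same role for the surrogate that the true-marginal-gain threshold plays offline. For the grid value $\rho$ closest to the optimal density $f(OPT)/(k+2l+1)$, the list returned by \algnamei{} either exhausts a knapsack budget or is blocked by the $k$-system, and in either case the $k$-extendibility/knapsack charging argument yields $\ucb_t(S)\ge\alpha f(OPT)$; since the main algorithm returns the candidate of largest $\ucb_t(\cdot)$, this transfers to $S_t$. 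The chief subtlety — and the reason the list score carries the factor $3$ and the extra filter $\ucb_t(e\mid\emptyset)/c(e)\ge\rho$ — is that $\ucb_t(\cdot\mid S)$ is \emph{not} submodular in $S$, so the usual diminishing-returns step of the offline proof must be replaced by an argument controlling the accumulated confidence widths, and the factor $3$ is exactly what makes that replacement go through.

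For the confidence-width direction I would apply Proposition~\ref{prop-ucb} elementwise along $S_t=\{e_t^{(1)},\dots,e_t^{(m_t)}\}$ and use the telescoping of marginal gains: since $f(S_t)=\sum_i\Delta f(e_t^{(i)}\mid\setslc{S_t}{i-1})$ and $\mu_{t-1}(S_t)=\sum_i\mu_{t-1}(e_t^{(i)}\mid\setslc{S_t}{i-1})$, the proposition gives $\mu_{t-1}(S_t)\le f(S_t)+\beta_{t-1}\sigma_{t-1}(S_t)$, whence $\ucb_t(S_t)=\mu_{t-1}(S_t)+3\beta_{t-1}\sigma_{t-1}(S_t)\le f(S_t)+4\beta_{t-1}\sigma_{t-1}(S_t)$. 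Summing the resulting per-round bound and using that $\beta_t$ is nondecreasing yields $\regret{\alpha}{T}\le 4\beta_T\sum_{t=1}^{T}\sigma_{t-1}(S_t)$.

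It remains to bound $\sum_t\sigma_{t-1}(S_t)=\sum_{t,i}\sigma_{t-1}(e_t^{(i)}\mid\setslc{S_t}{i-1})$. I would apply Cauchy--Schwarz over the at most $mT$ index pairs to get $\sum_{t,i}\sigma_{t-1}\le\sqrt{mT}\,\bl\sum_{t,i}\sigma_{t-1}^2\br^{1/2}$, and then control $\sum_{t,i}\sigma_{t-1}^2=\sum_t\Tr\bl M_{t-1}^{-1}(M_t-M_{t-1})\br$ by the log-determinant (elliptical potential) lemma. The assumptions $\lambda\ge m$ and $\|x(e\mid S)\|^2\le A$ bound each round's spectral increment by $A$, so the trace is $O(A)$ times $\ln\det\bl\lambda^{-1}M_T\br$; substituting and tracking the constants gives the stated $8A\beta_T\sqrt{2mT\ln\det\bl\lambda^{-1}M_T\br}$. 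Finally, for the big-$O$ form I would bound $\ln\det\bl\lambda^{-1}M_T\br=O(d\ln(mT))$ via $\Tr\bl\sum_{t,i} x_t^{(i)}\otimes x_t^{(i)}\br\le mTA$ and concavity of $\log\det$, and $\beta_T=O\bl\sqrt{d\ln(mT/\delta)}\br$ from its definition, which combine to $\regret{\alpha}{T}=O\bl d\sqrt{mT}\ln(mT/\delta)\br$.
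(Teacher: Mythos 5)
Your proposal is correct and follows essentially the same route as the paper: conditioning on the confidence event of Proposition~\ref{prop-ucb}, proving the optimism inequality $\ucb_t(S_t)\ge\alpha f(OPT)$ via the two-case (knapsack-blocked vs.\ $k$-system-blocked) threshold-greedy charging argument for the grid value $\rho^*$ near $\frac{2}{k+2l+1}f(OPT)$, converting back with $\ucb_t(S_t)\le f(S_t)+4\beta_{t-1}\sigma_{t-1}(S_t)$, and bounding $\sum_t\sigma_{t-1}(S_t)$ by Cauchy--Schwarz plus the elliptical-potential/information-gain bound (the paper's Lemmas~1 and~2 in the supplement are exactly your optimism step and your $\sum\sigma$ step). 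The only cosmetic difference is that the paper routes the width bound through the maximum information gain $\gamma_{mT}$ and Lemma~5 of \citet{yue2011linear}, whereas you invoke the log-determinant lemma directly; these coincide in the linear case.
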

\begin{rem}
   \begin{enumerate}
      \item In the published version of UAI 2020, the assumption ``$\lambda \ge m$'' was missing.
      However, we need this assumption as in the previous work \parencite[Lemma 5]{yue2011linear}.
      \item
      There is a tradeoff between the approximation ratio and computational complexity.
      As discussed in Section \ref{sec:comp-coplx},
      the computational complexity of the algorithm is given as
      $O(m |\ground| \ln (|\ground|)/\ln(1 + \epsilon))$ in each round, while
      the approximation of the algorithm is given as $\frac{1}{(1+\varepsilon) \bl k + 2l + 1\br}$.
      \item We assume the score function $f$ is a linear combination of known
      submodular functions.
      We can relax the assumption to the case when the function $(e, S) \rightarrow\Delta f(e | S)$
      belongs to an RKHS and has a bounded norm in the space as in \parencite{chen2017interactive}.
      We discuss this setting more in detail and provide a generalized result
       in the supplemental material.
   \end{enumerate}
\end{rem}

In the setting of \parencite{yue2011linear,yu2016linear},
greedy methods have good theoretical properties.
However, we show that for any $\alpha > 0$, these greedy methods incur linear $\alpha$-regret in the worst case
for our problem.
We denote by $\regretmcs(T)$ and $\regretlsb(T)$
the $\alpha$-regret of MCSGreedy and that of LSBGreedy, respectively.
Then the following proposition holds.
\begin{prop}
   \label{prop:greedy-linear}
   For any $\alpha > 0$,
   there exists cost $c_1$, $k$-system $\mathcal{I}$, a submodular function $f$,
   $T_0 > 0$ and a constant $C > 0$ such that
   with probability at least $1-\delta$,
   \begin{equation*}
      \regretmcs(T) > C T,
   \end{equation*}
   for any $T > T_0$.
   Moreover, the same statement holds for $\regretlsb(T)$.
\end{prop}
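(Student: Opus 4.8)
The plan is to exhibit, for each baseline, a \emph{one-dimensional} instance ($d=1$) on which the corresponding greedy rule provably reduces to its offline counterpart, and then to invoke classical offline counterexamples showing that a single greedy rule cannot handle two constraints at once.

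First I would collapse the bandit dynamics to a deterministic offline computation by taking $d=1$: a single known monotone submodular function $f_1 = F$ with unknown weight $w = 1$. Then every feature is the scalar $x(e \mid S) = \Delta F(e \mid S) \ge 0$, so $\ucb_t(e \mid S) = \bl w_{t-1} + \beta_{t-1}\sqrt{M_{t-1}^{-1}} \br \Delta F(e \mid S) =: \kappa_{t-1}\,\Delta F(e \mid S)$, and the modified score equals $\kappa_{t-1}\,\Delta F(e \mid S)/c_1(e)$. The crucial point is that $\kappa_{t-1}$ is a single scalar common to all elements, so the UCB (resp. modified UCB) ranking coincides \emph{exactly} with the ranking by the true marginal gain $\Delta F(e \mid S)$ (resp. by $\Delta F(e \mid S)/c_1(e)$). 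Since the feasibility tests for \eqref{eq:the-constraints} are purely combinatorial and do not depend on the estimates, LSBGreedy and MCSGreedy would then select, in every round, exactly the set produced by offline greedy-by-value and offline greedy-by-value-per-cost, respectively.

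The one probabilistic ingredient is to guarantee $\kappa_{t-1} > 0$ uniformly in $t$, and here I would use Proposition \ref{prop-ucb}. On its event, of probability at least $1-\delta$, the bound $\left|\mu_{t-1}(e\mid S) - \Delta f(e \mid S)\right| \le \beta_{t-1}\sigma_{t-1}(e\mid S)$ specializes, after dividing by $\Delta F(e\mid S) > 0$, to $\left|w_{t-1} - w\right| \le \beta_{t-1}\sqrt{M_{t-1}^{-1}}$, whence $\kappa_{t-1} \ge w = 1 > 0$ for all $t$. Thus on a single event of probability $\ge 1-\delta$ both baselines behave as their offline versions throughout the horizon, which is exactly the probability demanded by the statement.

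It then remains to supply the offline counterexamples, each pairing the knapsack with a $k$-system the rule ignores. For MCSGreedy I would take a cardinality constraint $|S| \le K$ (a uniform matroid, hence a $1$-system) together with one knapsack, with $f$ modular: $K$ items each of value $\varepsilon$ and cost $\varepsilon^2$, and one item $a$ of value $1$ and cost $1/2$. Value-per-cost greedy grabs the tiny items first (ratio $1/\varepsilon \gg 2$), saturating the cardinality bound at total value $K\varepsilon$, whereas $f(OPT) \ge f(\{a\}) = 1$. For LSBGreedy I would use a single knapsack with a trivial $k$-system: one item $a$ of value $1+\varepsilon$ and cost $1$, and $N$ items each of value $1$ and cost $1/N$; greedy-by-value takes $a$, exhausts the budget at value $1+\varepsilon$, while $f(OPT) = N$. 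Choosing $\varepsilon$ small (resp. $N$ large) forces $f(S_t) < \alpha f(OPT)$ with a fixed positive gap $C = \alpha f(OPT) - f(S_t)$, so $\regretmcs(T) = \sum_{t=1}^T \bl \alpha f(OPT) - f(S_t)\br \ge CT$ (and $>CT$ after shrinking $C$), and identically for $\regretlsb(T)$. I expect the main obstacle to be the first reduction, namely ruling out that the learning dynamics make a baseline deviate from offline greedy by inflating the UCB of an unobserved ``good'' element; the $d=1$ device together with the positivity of $\kappa_{t-1}$ from Proposition \ref{prop-ucb} is precisely what eliminates this difficulty, after which the two counterexamples are routine.
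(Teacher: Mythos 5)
Your reduction to the offline setting is genuinely different from the paper's and, for that part, arguably cleaner. The paper instead uses orthogonal indicator features (so the problem becomes a $2m$-armed MAB), and must then run a concentration argument showing each high-ratio-deficient arm is pulled at most $N(\delta,\epsilon,m)$ times, which is where its $T_0$ comes from. Your $d=1$ device makes the ranking by $\ucb_t(e\mid S)$ (resp.\ $\ucb_t(e\mid S)/c(e)$) coincide exactly with the offline ranking from round $1$ on the event of Proposition \ref{prop-ucb}, since $\kappa_{t-1}=w_{t-1}+\beta_{t-1}\sqrt{M_{t-1}^{-1}}\ge w=1>0$ there; no $T_0$ or per-arm counting is needed. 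This half of your argument is sound, and your LSBGreedy instance (one item of value $1+\varepsilon$ exhausting the budget versus $N$ cheap items of total value $N$) works.

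The gap is in the MCSGreedy half: MCSGreedy is not pure greedy-by-value-per-cost. As the paper's own proof notes, it selects by modified UCB ``except the first three elements,'' i.e.\ it performs partial enumeration over small seed sets before extending by ratio (this is also why its per-round cost is $O(|\ground|^3)$) --- precisely the classical fix for the one-big-item-versus-many-tiny-items failure mode of ratio greedy that your counterexample relies on. On your instance, one of the enumerated candidates contains the item $a$ of value $1$, and extending it still satisfies both constraints, so MCSGreedy returns a set of value at least $1$ while $f(OPT)\le 1+K\varepsilon$; for any fixed $\alpha<1$ and small $\varepsilon$ the per-round gap $\alpha f(OPT)-f(S_t)$ is then negative and no linear regret follows. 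The repair is to make the optimum diffuse rather than concentrated, as the paper does: take $\ground=\ground_1\sqcup\ground_2$ with $|\ground_1|=|\ground_2|=m$, a cardinality constraint $|S|\le m$, values $1/m$ and $(1+\epsilon)/m^2$ and costs $1/m$ and $1/m^2$ on $\ground_1$ and $\ground_2$ respectively. Then $f(OPT)=1$, ratio greedy fills the cardinality budget from $\ground_2$, and even if a constant number of seeds come from $\ground_1$ the value collected per round is at most roughly $3/m+(m-3)(1+\epsilon)/m^2$, which is below $\alpha$ for $m$ large. That construction plugs directly into your $d=1$ reduction, so the rest of your argument survives.
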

We provide the proof in the supplemental material.

\subsection{SKETCH OF THE PROOF OF THEOREM \ref{thm:main}}
We provide a sketch of the proof of Theorem \ref{thm:main}
and provide a detailed and generalized proof in the supplemental material.
Throughout the proof, we fix the event $\mathcal{F}$ on which the inequality in
Proposition \ref{prop-ucb} holds.

We evaluate the solution $S_t$ by \algname{} in each round $t$.
The following is a key result for our proof of Theorem \ref{thm:main}.
\begin{prop}
   \label{prop:sub-greedy}
   Let $C \subseteq \ground$ be any set satisfying
   the constraint \eqref{eq:the-constraints}.
   Let $S$ be a set returned by \algnamei{} at time step $t$.
   Then, on the event $\mathcal{F}$, we have
   \begin{equation*}
      f(S) + 2 \beta_{t-1}\sigma_{t-1}(S) \ge \min \brl
      \frac{\rho}{2},
      \frac{1}{k + 1} f(S \cup C) - \frac{l \rho}{k + 1}
      \brr.
   \end{equation*}
\end{prop}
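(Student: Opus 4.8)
The plan is to analyze the set $S$ returned by \algnamei{} under threshold $\rho$ by splitting into two cases depending on why the while-loop terminated, and in each case relating the true value $f(S)$ to the comparison set $C$ via submodularity together with the UCB bound from Proposition \ref{prop-ucb}. On the event $\mathcal{F}$, for every $e$ and every prefix we have $\mu_{t-1}(e\mid S') - \beta_{t-1}\sigma_{t-1}(e\mid S') \le \Delta f(e\mid S') \le \mu_{t-1}(e\mid S') + \beta_{t-1}\sigma_{t-1}(e\mid S') = \ucb_t(e\mid S')$, so the UCB score sandwiches the true marginal gain. Summing over the elements of $S$ in the order they were selected gives $f(S) \ge \sum_i \ucb_t(e_i\mid \setslc{S}{i-1}) - 2\beta_{t-1}\sigma_{t-1}(S)$, which is why the quantity $f(S) + 2\beta_{t-1}\sigma_{t-1}(S)$ appears on the left-hand side: it lower-bounds $\sum_i \ucb_t(e_i \mid \setslc{S}{i-1})$, reducing the problem to bounding the sum of the selected UCB scores.

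\textbf{The two cases.} First I would dispose of the easy case: if some element $e_i$ with a \emph{large} modified UCB score was selected — more precisely if the first selected element (or more generally the aggregate cost-weighted UCB mass) is large — then since every selected $e$ satisfies $\ucb_t(e\mid S')/c(e) \ge \rho$, summing $\ucb_t(e_i\mid\setslc{S}{i-1}) \ge \rho\, c(e_i)$ over the selected elements and using $\sum_{e\in S} c(e) = \sum_{e\in S}\sum_{j=1}^l c_j(e) = \sum_{j=1}^l c_j(S)$ bounds the total UCB mass from below by $\rho$ times the total cost; if the total cost is at least $1/2$ this already yields $f(S)+2\beta_{t-1}\sigma_{t-1}(S) \ge \rho/2$, the first term in the minimum. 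The substantive case is when the loop terminated because $\ground_{S,\ge\rho} = \emptyset$, i.e. the algorithm \emph{exhausted} the good elements. Here every $e \in C\setminus S$ must have failed one of the filter conditions: either $S + e$ violated the constraint \eqref{eq:the-constraints}, or $\ucb_t(e\mid S)/c(e) < \rho$, or $\ucb_t(e\mid\emptyset)/c(e) < \rho$. I would partition $C\setminus S$ accordingly and bound the contribution of each group.

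\textbf{Exploiting the $k$-system and the knapsacks.} The combinatorial heart of the argument is the elements of $C$ excluded for \emph{feasibility}. For these I would invoke the $k$-extendibility (or $k$-system exchange) property: because $S$ was built greedily while remaining feasible, the number of elements of $C$ that cannot be added to $S$ is controlled by $k|S|$, which is exactly where the $k+1$ denominator originates. For elements excluded because their \emph{marginal} modified UCB fell below $\rho$, submodularity gives $\Delta f(e\mid S) \le \ucb_t(e\mid S) < \rho\, c(e)$, so $f(S\cup C) - f(S) \le \sum_{e\in C\setminus S}\Delta f(e\mid S)$ can be split into a term bounded by $\rho\sum_j c_j(C) \le l\rho$ (since each knapsack satisfies $c_j(C)\le 1$) plus the feasibility/threshold contributions; the $\ucb_t(e\mid\emptyset)/c(e) < \rho$ condition handles elements whose standalone value is already too small, using $\Delta f(e\mid S)\le\Delta f(e\mid\emptyset)\le\ucb_t(e\mid\emptyset)$ by monotonicity of marginal gains under submodularity. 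Collecting the $k|S|$ feasibility term (absorbed into $f(S)$ via the selected UCB scores) and the $l\rho$ knapsack term and rearranging yields $f(S\cup C) \le (k+1)\bigl(f(S)+2\beta_{t-1}\sigma_{t-1}(S)\bigr) + l\rho$, which is the second branch of the minimum. \textbf{The main obstacle} I anticipate is the careful bookkeeping of the $k$-system exchange bound in the presence of UCB scores rather than true marginals: the diminishing-returns property fails for $\ucb_t$, so the exchange argument must be run on $f$ (where submodularity holds) while the \emph{selection rule} and the filter thresholds are phrased in terms of $\ucb_t$, and reconciling these two — ensuring the factor-$3$ slack in $\ucb_t(S)$ and the auxiliary $\ucb_t(e\mid\emptyset)$ condition exactly cover the gap — is the delicate step.
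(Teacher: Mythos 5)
Your overall architecture matches the paper's proof: the same two-case split on the reason \algnamei{} terminates, the same sandwich $f(S)+2\beta_{t-1}\sigma_{t-1}(S)\ge\sum_i\ucb_t(e_i\mid\setslc{S}{i-1})$ obtained by summing the UCB bound over the selected elements, the same partition of $C$ into elements failing a threshold test at some prefix versus elements passing it at every prefix, the same $l\rho$ bound from $\sum_j c_j(C)\le l$, and the same $k$-system exchange argument (run on $f$, where submodularity holds, while the selection rule is phrased via $\ucb_t$) yielding $f(S\cup C)\le (k+1)\bl f(S)+2\beta_{t-1}\sigma_{t-1}(S)\br+l\rho$. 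Your Case Two is a faithful if compressed sketch of the paper's: the paper makes the exchange step precise by observing that the UCB-greedy algorithm restricted to the ground set $S\cup C_{\ge\rho}$ under the $k$-system alone returns exactly $S$, and then invoking a noisy version of the Calinescu et al.\ partition argument ($C=\bigcup_i C_i$ with $|C_i|\le k$ and each $e\in C_i$ addable to $\setslc{S}{i-1}$), which is the bookkeeping you correctly flag as the delicate point.

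The genuine gap is in your Case One (termination because every filter-passing element violates a knapsack). You derive $f(S)+2\beta_{t-1}\sigma_{t-1}(S)\ge\rho\,c(S)$ and then say ``if the total cost is at least $1/2$ this already yields $\rho/2$'' --- but nothing forces $c(S)\ge 1/2$. The knapsack violation only gives $c(S)+c(e')>1$ for a blocking element $e'$, so one must \emph{also} prove $f(S)+2\beta_{t-1}\sigma_{t-1}(S)\ge\rho\,c(e')$ in order to conclude via $\rho\max\brl c(S),c(e')\brr\ge\frac{\rho}{2}\bl c(S)+c(e')\br\ge\frac{\rho}{2}$. That second lower bound cannot come from $\ucb_t(e'\mid S)\ge\rho c(e')$ alone, precisely because $\ucb_t$ lacks diminishing returns; it is exactly where the auxiliary filter $\ucb_t(e\mid\emptyset)/c(e)\ge\rho$ earns its keep. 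The paper's chain is $f(S)+2\beta_{t-1}\sigma_{t-1}(S)\ge\ucb_t(e_1\mid\emptyset)\ge\ucb_t(e'\mid\emptyset)\ge\rho c(e')$, where the middle inequality uses that the first selected element $e_1$ maximizes $\ucb_t(\cdot\mid\emptyset)$ over a filtered set containing $e'$. You invoke the $\emptyset$-filter only in Case Two, where it is in fact redundant (it is just the $i=0$ instance of the prefix condition handled uniformly by submodularity of $f$); its real purpose is this step of Case One, and as written your sketch does not close that case.
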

\begin{proof}[sketch of proof]
   This can be proved in a similar way to the proof of
   \parencite[Theorem 6.1]{badanidiyuru2014fast} or
   \parencite[Theorem 5.1]{mirzasoleiman2016fast}.
   However, because of uncertainty and lack of diminishing property of the UCB score,
   we need further analysis.
   We divide the proof into two cases.

   \textbf{Case One}.
   This is the case when \algnamei{} terminates because
   there exists an element $e$
   such that $\ucb_t(e \mid S) \ge \rho c(e)$
   and $\ucb_t(e \mid \emptyset) \ge \rho c(e)$,
   but any element $e$ satisfying
   $\ucb_t(e \mid S),\ucb_t(e \mid \emptyset) \ge \rho c(e)$
   does not satisfy the knapsack constraints, i.e., $c_j(S + e) > 1$ for some $1 \le j \le l$.
   We fix an element $e'$ satisfying $\ucb_t(e' \mid S), \ucb_t(e' \mid \emptyset) \ge \rho c(e')$.
   Because any element of $S$ has enough modified UCB score,
   by Proposition \ref{prop-ucb}, we have
   \begin{math}
      f (S) + 2\beta_{t-1}\sigma_{t-1}(S)
      \ge \rho c(S).
   \end{math}
   By the definition of $e'$, we also have $\ucb_t(e' \mid \emptyset) \ge \rho c(e')$.
   Because $f (S) + 2\beta_{t-1}\sigma_{t-1}(S) \ge \ucb_t(e' \mid \emptyset) \ge \rho c(e')$ and
   $S + e'$ does not satisfy the knapsack constraint,
   we have
   \begin{math}
      f (S) + 2\beta_{t-1}\sigma_{t-1}(S)
      \ge \frac{\rho}{2} c(S + e') \ge \rho / 2.
   \end{math}

   \textbf{Case Two}.
   This is the case when \algnamei{} terminates because
   for any element $e$ satisfying $\ucb_t(e \mid S),\ucb_t(e \mid \emptyset) \ge \rho c(e)$,
   $e$ satisfies the knapsack constraints
   but $S + e$ does not satisfy the $k$-system constraint.
   We note that this case includes the case
   when there does not exist an element $e$ satisfying
   $\ucb_t(e \mid S),\ucb_t(e \mid \emptyset) \ge \rho c(e)$.

   We define a set $C_{< \rho}$ as
   \begin{align*}
   \brl e \in C \mid \exists i \text{ such that }
   \ucb_t(e \mid \setslc{S}{i}) < \rho c(e)
   \brr,
   \end{align*}
   and $C_{\ge \rho } = C \setminus C_{< \rho}$.
   Let $e \in C_{< \rho}$.
   Then on the event $\propevent$, by Proposition \ref{prop-ucb} and submodularity,
   we have
   \begin{equation}
      \label{eq:crhorho}
      \Delta f(C_{< \rho} \mid S) \le
      \sum_{e \in C_{< \rho}}  \Delta f(e \mid S)
      \le \sum_{e \in C_{< \rho}}  \rho c(e)
      \le l \rho.
   \end{equation}

   Next, we consider $C_{\ge \rho}$.
   Running the greedy algorithm (with respect to the UCB score) on $S \cup C_{\ge \rho}$
   under only the $k$-system constraint,
   we obtain $S$ by the assumption of this case.
   Then, it can be proved that
   \begin{math}
      f(S) + 2\beta_{t-1}\sigma_{t-1}(S) \ge
      \frac{1}{k + 1}f(S \cup C_{\ge \rho}).
   \end{math}
   We note that this is a variant of the result proved in \parencite[Appendix B]{calinescu2011maximizing}.
   By this inequality, inequality \eqref{eq:crhorho}, and submodularity, we can derive the desired result.
\end{proof}
Using Proposition \ref{prop:sub-greedy}, we can bound the approximation
regret above by the sum of uncertainty $\beta_{t-1}\sigma_{t-1}(S_t)$.
Because the algorithm selects $S_t$ and obtain feedbacks for $S_t$,
the sum of uncertainty can be bounded above by a sub-linear function of $T$.

\section{EXPERIMENTAL ANALYSIS}
\newcommand{\figureheight}{2.8cm}
\begin{figure*}[ht]
   \begin{center}
      \includegraphics[width=\linewidth,height=\figureheight]{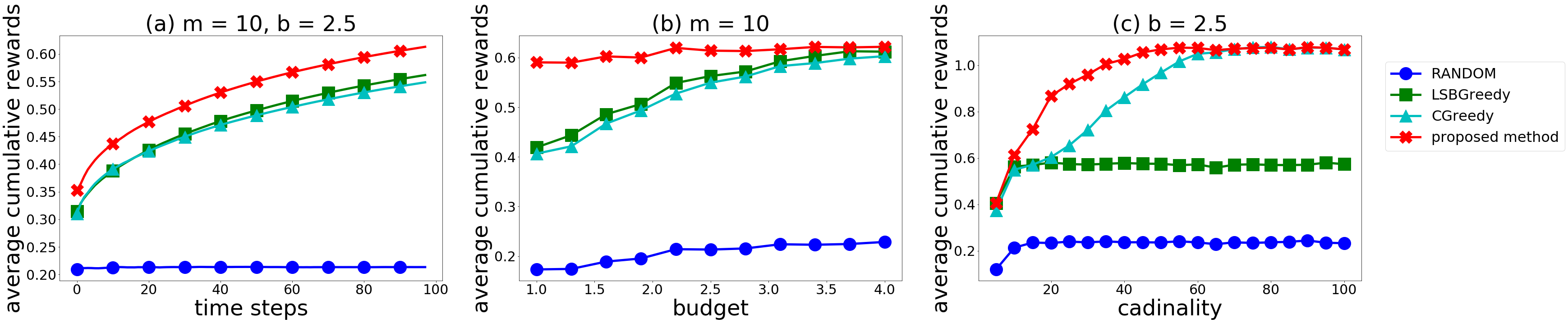}
      \caption{Cumulative average rewards on the synthetic news article recommendation dataset}
      \label{fig:art}
   \end{center}
\end{figure*}


\begin{figure*}[h]
   \begin{center}
      \includegraphics[width=\linewidth,height=\figureheight]{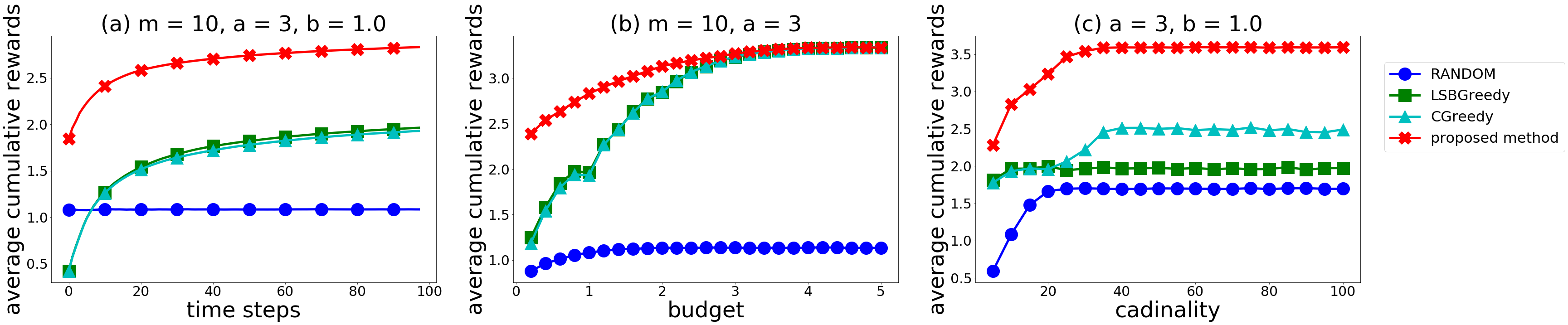}
      \caption{Cumulative average rewards on the MovieLens dataset}
      \label{fig:graph2}
   \end{center}
\end{figure*}

\begin{figure*}[h]
   \begin{center}
      \includegraphics[width=\linewidth,height=\figureheight]{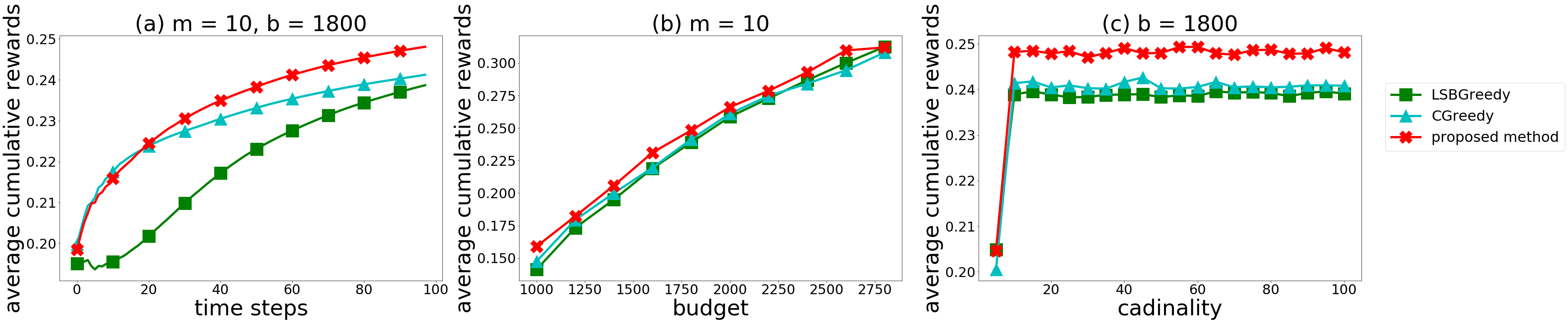}
      \caption{Cumulative average rewards on the Million Song Dataset}
      \label{fig:msd}
   \end{center}
\end{figure*}

\label{sec:exp}
In this section,
we empirically evaluate our methods by
a synthetic dataset that simulates an environment for news article recommendation
and two real-world datasets
(MovieLens100K \parencite{movielens100k} and the Million Song Dataset \parencite{Bertin-Mahieux2011}).

We compare our proposed algorithm to the following baselines:
\begin{itemize}
   \item RANDOM. In each round, this algorithm selects elements uniform randomly until no element satisfies the constraints.
   \item LSBGreedy. This was proposed in \parencite{yue2011linear}
   to solve the submodular bandit problem under a cardinality constraint.
   In the linear kernel case,
   SM-UCB \parencite{chen2017interactive} is equivalent to LSBGreedy.
   \item CGreedy. This is an algorithm for a submodular bandit problem under a knapsack constraint
   and was proposed in \parencite{yu2016linear}.
   They also proposed an algorithm called MCSGreedy. However because MCSGreedy is computationally expensive
   (in each round it calls functions $f_1, \dots, f_d$ for $O(|\ground|^3)$ times) and
   their experimental results show that both algorithms have
   a similar empirical performance, we do not add MCSGreedy to the baselines.
\end{itemize}
In Proposition \ref{prop:greedy-linear}, we showed that these greedy algorithms
incur linear approximation regret in the worst case.
However, even without theoretical guarantee,
it is empirically known that a greedy algorithm achieve a good experimental performance.
In this section, we demonstrate that our algorithm outperforms these greedy algorithms under
various combinations of constraints.
As a special case, such constraints include the case when there is a sufficiently large
budget for knapsack constraints and
the case when the $k$-system constraint is sufficiently mild.
The greedy algorithms are algorithms for such cases.
We also show that our proposed method performs no worse than the baselines
even in these cases.

As in the preceding work \parencite{yue2011linear}, we assume the score function $f$
is a linear combination of known probabilistic coverage functions.
We assume there exists a set of topics (or genres) $\mathcal{G}$ with $|\mathcal{G}| = d$ and
for each item $e \in \ground$,
there is a feature vector
$x(e) := (P_g(e))_{g \in \mathcal{G}} \in \RR^d$ that represents the information coverage
on different genres.
For each genre $g$, we define the probabilistic coverage function $f_g(S)$ by
$1 - \prod_{e \in S}(1 - P_g(e))$ and we assume $f = \sum_{i}w_i f_i$
with unknown linear coefficients $w_i$.
The vector $w := [w_1, \dots, w_d]$ represents user preference on genres.
We assume that the noisy rewards $y_t^{(i)}$ are sampled by
 $y_t^{(i)} \sim \mathrm{Ber} \bl \Delta f(e_t^{(i)} \mid \setslc{S_t}{i-1}) \br$.
Below, we define these feature vectors $x(e)$, $w$, and constraints explicitly.
We note that in the experiments, we use an un-normalized knapsack constraint $c(S) \le b$.
In the following experiments, using 100 users (100 vectors $w$),
we compute cumulative average rewards for each algorithm.
When taking the average, we repeated this experiment 10 times for each user.

\subsection{NEWS ARTICLE RECOMMENDATION}
In this synthetic dataset, we assume $d = 15$ and $|\ground| = 1000$.
We define $x(e)$ and costs for a knapsack constraint in a similar manner in \parencite{yu2016linear}.
We sample each entry of $x(e)$ from two types of uniform distributions.
We assume that for each item $e$,
the number of genres that have high information coverage is limited to two.
More precisely, we randomly select two indices of $x(e)$ and sample entries from
$U(0.5, 0.8)$ and sample other entries from $U(0.0, 0.01)$.
We generate 100 user preference vectors $w$ in a similar way to $x(e)$.
We also sample the costs of items uniform randomly from $U(0.0, 1.0)$.
In this dataset, we consider the intersection of a cardinality constraint and a knapsack constraint.
The result is shown in Figure \ref{fig:art}.

\subsection{MOVIE RECOMMENDATION}
We perform a similar experiment in \parencite{mirzasoleiman2016fast} but with a semi-bandit feedback.
In MovieLens100K, there are 943 users and 1682 movies.
We take $\ground$ as the set of 1682 movies in the dataset.
There are $d = 18$ genres in this dataset.
First, we fill the ratings for all the user-item pairs using matrix factorization \parencite{koren2009matrix}
and we normalized the ratings $r$ so that $r \in [0, 1]$.
For each movie $e \in \ground$, we denote by $r_e \in [0, 1]$ the mean of the
ratings of the movie for all users.
We define $P(g \mid e) = r_e / |\mathcal{G}_e|$ if $g \in \mathcal{G}_e$,
otherwise we define $P(g \mid e) = 0$.
We normalize $P(g \mid e)$ as previously mentioned, because
if $w_i = 1$ for all $i$, then we have $P(\{e \}) = r_e$.

We define a similar knapsack, cardinality, and matroid constraints to
those of \parencite{mirzasoleiman2016fast}.
For $e \in \ground$, the cost $c(e)$ is defined as $c(e) = F_{\mathrm{Beta}(10, 2)}(r_e)$, where
$F_{\mathrm{Beta}(10, 2)}$ is the cumulative distribution function of the $\mathrm{Beta}(10, 2)$.
For a budget $b \in \RR_{> 0}$, we consider a knapsack constraint $c(S) \le b$.
The beta distribution lets us differentiate the highly rated movies from those with lower ratings
\parencite{mirzasoleiman2016fast}.
We generate 100 user preference vectors $w$ in a similar way to
the news article recommendation example.
In this dataset, we consider the following constraints on genres
in addition to the knapsack $c(S) \le b$ and cardinality $|S| \le m$ constraints,
There are $k$ genres in MovieLens100K, where $k = d = 18$.
For each genre $g$, we fix a non-negative integer $a$ and consider the constraint
\begin{math}
   |\brl e \in S \mid e \text{ has genre } g \brr| \le a
\end{math}
for $S \subseteq \ground$.
This can be regarded as a partition matroid constraint.
Therefore, the intersection of the constraints for all genres
is a $k$-system constraint.
One can prove that the intersection of this $k$-system constraint and
a cardinality constraint is also a $k$-system constraint.
The results are displayed in Figure \ref{fig:graph2}
in the case of the matroid limit $a = 3$.

\subsection{MUSIC RECOMMENDATION}
From the Million Song Dataset, we select 1000 most popular songs and 30 most popular genres.
Thus, we have $|\ground| = 1000$ and $d = 30$.
For active 100 users, we compute $P_g(e)$ and user preference vector $w$
in almost the same way as $\overline{w}(e, g)$ and
$\theta^{*}$ in \parencite{hiranandanicascading} respectively.
They assume that a user likes a song $e$ if the user listened to the song at least five times,
however, we assume that a user likes the song if the user listened to the song at least two times.
We consider the intersection of a cardinality and a knapsack constraint
$c(S) \le b$.
We define a cost $c$ for the knapsack constraint by the length (in seconds) of the song in the dataset.
The costs represent the length of time spent by users before they decide to listen to the song
and we assume that it is proportional to the length of the song
\footnote{We can also assume that users listen to the song and give feedbacks later.}.
The results are displayed in Figure \ref{fig:msd}.
We do not show the performance of RANDOM in the figure since it achieves only very low rewards.

\subsection{RESULTS}
In Figures \ref{fig:art}a, \ref{fig:graph2}a, \ref{fig:msd}a,
we plot the cumulative average rewards for each algorithm up to time step $T = 100$.
In Figures \ref{fig:art}b, \ref{fig:graph2}b, and, \ref{fig:msd}b
(resp. \ref{fig:art}c, \ref{fig:graph2}c, and, \ref{fig:msd}c),
we show the cumulative average rewards at the final round by changing the budget $b$
(resp. by changing the cardinality limit $m$) and
fixing the cardinality limit $m$ (resp. fixing the budget $b$).
These results shows that overall our proposed method outperforms the baselines.
We note that Figure \ref{fig:msd} shows different tendency
as compared to other datasets since
popular items in the Million Song Dataset have high information coverage for multiple genres
and about 47 $\%$ of the items have low information coverage (less than 0.01) for all genres.
Figures \ref{fig:art}b, \ref{fig:graph2}b, and \ref{fig:msd}b
also show the results for the case when the budget is sufficiently large.
This is the case when LSBGreedy performs well and our experimental results show that even in this case,
our method have comparable performance to greedy algorithms.
Moreover, Figures \ref{fig:art}c, \ref{fig:graph2}c, and \ref{fig:msd}c
also show the results in the case when the cardinality constraints are sufficiently mild.
In this case, CGreedy performs well since the constraints are almost same as a knapsack constraint.
The experimental results show that our method
tends to have better performance than that of CGreedy even in this case.

\section{CONCLUSIONS}
In this study,
motivated by diversified retrieval considering cost of items,
we introduced the submodular bandit problem
under the intersection of a $k$-system and knapsack constraints.
Then, we proposed a non-greedy algorithm to solve the problem
and provide a strong theoretical guarantee.
We demonstrated our proposed method outperforms
the greedy baselines using synthetic and two real-world datasets.

A possible generalization of this work is a generalization to
the full bandit setting. In this setting,
a leaner observes only a value $f(S_t) + \epsilon$ in each round.
Since it needs much work to derive a theoretical guarantee,
we leave this setting for future work.


\section*{APPENDIX}
In this appendix, we generalize the reward model considered in the main article
to the kernelized setting as in \parencite{chen2017interactive}.
We also provide parameters used in the experiments.
\section{PROBLEM FORMULATION UNDER A GENERALIZED REWARD MODEL}
\label{sec:prob-form-gen}
In this appendix, we consider the same reward model as in \parencite{chen2017interactive},
but subject to the intersection of knapsacks and $k$-system constraint as in the main article.
SM-UCB \parencite{chen2017interactive} is based on
CGP-UCB \parencite{krause2011contextual} or GP-UCB \parencite{srinivas2010gaussian}.
However, recently, \parencite{chowdhury2017kernelized} improved
the assumption and the regret analysis of
GP-UCB \parencite{srinivas2010gaussian}.
We follow setting of \parencite{chowdhury2017kernelized}.

Let $\Phi$ be a compact subset of some Euclidean space, which represents the space of
contexts.
We consider the following sequential decision making process for times steps $t = 1, \dots, T$.
\begin{enumerate}
   \item The algorithm observes context $\phi_t \in \Phi$ and
   selects a list $S_{t} = \brl e^{(1)}_{t}, \dots, e^{(m_t)}_{t} \brr \subseteq \ground$ satisfying
   the constraints.
   \item The algorithm receives noisy rewards $y_{t}^{(1)}, \dots, y_{t}^{(m_t)}$ as follows:
   \begin{equation*}
      y_{t}^{(i)} = \Delta f_{\phi_t}\bl e_{t}^{(i)} \mid \setslc{S_t}{i-1} \br + \varepsilon_{t}^{(i)},
   \end{equation*}
   for $i = 1, \dots, m_t$.
   Here
   $f_{\phi_t}$ is a non-negative, monotone
   submodular function unknown to the algorithm,
   $\setslc{S_t}{i - 1} = \brl e_{t}^{(1)}, \dots, e_{t}^{(i - 1)} \brr$ and $\varepsilon_{t}^{(i)}$ is a noise.
\end{enumerate}
Here, we regard $\phi_t, \setslc{S_t}{i-1}$, $e_{t}^{(i -1)}$ and $\varepsilon_{t}^{(i)}$ as random variables.

\subsection{ASSUMPTIONS REGARDING THE SCORE FUNCTION $f_{\phi}$}
The linear model considered in the main article
can be generalized to an infinite dimensional case \parencite{chen2017interactive}.
We let $D := \Phi \times 2^{\ground} \times \ground$
and define $\chi: D \rightarrow \RR$
by $\chi((\phi, S, e)) = \Delta f_\phi (e \mid S)$.
We assume that there exists an RKHS (reproducing kernel Hilbert space)
$\mathcal{H}$ on $D$ with a positive definite (or linear) kernel $\kappa$
and $\chi$ belongs to $\mathcal{H}$ and the norm $|\chi|_\mathcal{H}$ is bounded by $B > 0$.
We assume that $\kappa$ is continuos on $D \times D$.
We also assume that $\kappa(x, x) \le 1$ for any $x \in D$.
If $\kappa(x, x) \le c$, then our $\alpha$-regret would increase by a factor of $\sqrt{c}$.
\subsection{ASSUMPTIONS REGARDING NOISE STOCHASTIC PROCESS}
As for noises, we consider the same assumption as in the main article.

\section{DEFINITION OF UCB SCORES}
In this section, following \parencite{chen2017interactive,chowdhury2017kernelized},
we generalize UCB scores defined in the main article.

We let $x_t^{(i)} = (\phi_{t}, \allowbreak \setslc{S_t}{i - 1},\allowbreak e_{t}^{(i)}) \in D$ and
$\mathbf{x}_{(1:t)} = (x_1^{(1)},\allowbreak \dots,\allowbreak x_1^{(m_1)},\allowbreak
\dots,\allowbreak x_t^{(1)},\allowbreak \dots x_t^{(m_t)}) \in D^{M}$,
where $M = \sum_{s=1}^{t}m_s$.
We also define $\mathbf{y}_{(1:t)} \in \RR^{M}$ as
$(y_1^{(1)},\allowbreak \dots,\allowbreak y_1^{(m_1)},\allowbreak \dots,\allowbreak y_t^{(1)},
\allowbreak \dots y_t^{(m_t)})$.
For a sequence $\xi = (\xi_1, \dots, \xi_s) \in D^{s}$, we define
$K(\xi) = \bl \kappa(\xi_i, \xi_j) \br_{1\le i, j \le s} \in \RR^{s \times s}$ and
$\kappa(\xi, x) = [\kappa(\xi_1, x), \dots, \kappa(\xi_s, x)]^{\trn} \in \RR^{s}$.
We also let $\kappa_t(x) = \kappa(\mathbf{x}_{(1:t)}, x) \in \RR^{M}$ and
$K_t = K(\mathbf{x}_{(1:t)}) \in \RR^{M \times M}$.
Then, we define $\mu_{t}(x) \in \RR $ and $ \sigma_{t}^{2}(x) \in \RR_{> 0}$ as follows:
\begin{align*}
   \mu_{t}(x) &= \kappa_{t} (x)^{\trn} \bl K_t + \lambda I \br^{-1}  \mathbf{y}_{1:t},\\
   \kappa_t(x, x') &= \kappa(x, x') - \kappa_t(x)^{\trn} \bl K_t + \lambda I \br^{-1} \kappa_t(x'),\\
   \sigma_t^2(x) &= \kappa_t(x, x).
\end{align*}
Here, $\lambda \in \RR_{>0}$ is a parameter of the model.
If $x = (\phi, S, e)$, we also write $\mu_{t}(e \mid \phi,S) = \mu_{t}(x)$
and $\sigma_{t}(e \mid \phi, S) = \sigma_{t}(x)$.

We define a UCB score of the marginal gain by
\begin{equation*}
   \ucb_t(e \mid \phi, S) =
   \mu_{t-1}(e \mid \phi, S) + \beta_{t-1} \sigma_{t - 1}(e \mid \phi, S),
\end{equation*}
and a modified UCB score by $\ucb_t(e \mid \phi, S)/c(e)$.
Here, $\beta_{t}$ is defined as
\begin{math}
    B + R \sqrt{2 \bl \gamma_{M_t} + 1 + \ln(1/\delta) \br}
\end{math}
and $M_t = \sum_{s=1}^{t} m_s$.
Here $\gamma_s$ is the maximum information gain
\parencite{srinivas2010gaussian,chowdhury2017kernelized} after observing $s$ rewards.
We refer to \parencite{chowdhury2017kernelized} for the definition.

The following proposition is a generalization of Proposition \ref{prop-ucb}
and is a
direct consequence of
(the proof of) \parencite[Theorem 2]{chowdhury2017kernelized}.
\begin{prop}
   \label{prop-ucb-gen}
   We assume there exists $m \in \ZZ_{\ge 1}$ such that $m_t \le m$ for all $1\ \le t \le T$.
   Then, with probability at least $1-\delta$, the following inequality holds:
   \begin{equation*}
      \left| \mu_{t-1}(e \mid \phi, S) - \Delta f_{\phi}(e \mid S)
      \right| \le
      \beta_{t-1} \sigma_{t - 1}(e \mid \phi, S),
   \end{equation*}
   for any $t, \phi, S,$ and $e$.
\end{prop}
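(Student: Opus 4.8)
The plan is to reduce the statement to the self-normalized confidence bound of \parencite[Theorem 2]{chowdhury2017kernelized}, which concerns a single stream of kernelized observations. First I would flatten the semi-bandit data into such a stream using the lexicographic order on the index set $\brl (t,i) \mid 1 \le i \le m_t \brr$ already fixed in the noise assumptions: after the first $M_{t-1} = \sum_{s=1}^{t-1} m_s$ of these indices, the observations are $y_s^{(j)} = \chi(x_s^{(j)}) + \varepsilon_s^{(j)}$ with $x_s^{(j)} = (\phi_s, \setslc{S_s}{j-1}, e_s^{(j)}) \in D$ and target $\chi(x) = \Delta f_\phi(e \mid S)$. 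In this guise, $\mu_{t-1}$ and $\sigma_{t-1}$ are exactly the kernel ridge regression mean and posterior standard deviation built from these $M_{t-1}$ points.

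The substantive step is to check that the hypotheses of \parencite[Theorem 2]{chowdhury2017kernelized} hold for this stream. Three are assumptions already in force: $\chi \in \mathcal{H}$ with $|\chi|_{\mathcal{H}} \le B$; the bounded kernel $\kappa(x,x) \le 1$; and the conditional $R$-sub-Gaussianity of $\brl \varepsilon_t^{(i)} \brr$ with respect to the filtration $\brl \mathcal{F}_{t,i} \brr$. The remaining, and genuinely structural, requirement is that each feature point $x_t^{(i)}$ be predictable, i.e. $\mathcal{F}_{t,i}$-measurable. Here I would use the fact that Algorithm \ref{algo:main} commits to the entire list $S_t$ before any reward of round $t$ is revealed: the scores $\ucb_t(\cdot)$ driving every selection in round $t$ depend only on $\mu_{t-1}$ and $\sigma_{t-1}$, hence only on data from rounds $< t$. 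Consequently $e_t^{(i)}$ and $\setslc{S_t}{i-1}$, and therefore $x_t^{(i)}$, are $\mathcal{F}_{t,1}$-measurable, and a fortiori $\mathcal{F}_{t,i}$-measurable, so predictability holds for the whole round at once.

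With the hypotheses verified, I would invoke \parencite[Theorem 2]{chowdhury2017kernelized} to obtain, on a single event of probability at least $1-\delta$, the bound $|\mu_{t-1}(x) - \chi(x)| \le \beta_{t-1}\sigma_{t-1}(x)$ simultaneously for all $x \in D$ and all $t$, with $\beta_{t-1} = B + R\sqrt{2(\gamma_{M_{t-1}} + 1 + \ln(1/\delta))}$; crucially their self-normalized martingale argument yields uniformity in $t$ without a union bound, and the relevant information-gain index is the true number of observations $M_{t-1}$, not $t-1$. Specializing $x = (\phi, S, e)$ and recalling $\chi(x) = \Delta f_\phi(e \mid S)$ gives the claim. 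The main obstacle is purely the bookkeeping of this reindexing: one must be careful that the multiple-observations-per-round structure is absorbed correctly, so that the effective sample size feeding $\gamma$ and the filtration is $M_t = \sum_s m_s$ rather than $t$, and that predictability survives the flattening.
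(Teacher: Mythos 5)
Your proposal is correct and follows essentially the same route as the paper, which gives no argument beyond declaring the proposition ``a direct consequence of (the proof of) Theorem 2 of \citet{chowdhury2017kernelized}''. You simply make explicit the reduction the authors leave implicit --- flattening the per-round observations via the lexicographic order so that the effective sample size is $M_t=\sum_s m_s$, and checking predictability of the feature points $x_t^{(i)}$ with respect to the filtration $\mathcal{F}_{t,i}$ --- which is a faithful (and somewhat more careful) rendering of the intended proof.
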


\section{STATEMENT OF THE MAIN THEOREM}
With generalized UCB scores, we consider the same algorithm in the main article.
Then, we provide a statement for the generalized version of Theorem \ref{thm:main}.

\begin{thm}
   \label{thm:main-gen}
   Let the notation and assumptions be as previously mentioned.
   We also assume that $\lambda \ge m$.
   We let $\alpha = \frac{1}{(1+\varepsilon) \bl k + 2l + 1\br}$, and define $\alpha$-regret as
   \begin{equation*}
      \regret{\alpha}{T} = \sum_{t = 1}^{T}\alpha f_{\phi_t}(OPT_t) - f_{\phi_t}(S_t),
   \end{equation*}
   where $OPT_t$ is a feasible optimal solution at round $t$.
   Then, with probability at least $1-\delta$,
   the proposed algorithm achieves the following $\alpha$-regret bound:
   \begin{equation*}
      \regret{\alpha}{T}
      \le 8 \beta_{T} \sqrt{ mT \gamma_{mT}}.
   \end{equation*}
   In particular,
   with at least probability $1 - \delta$, the $\alpha$-regret
   $\regret{\alpha}{T}$ is given as
   \begin{equation*}
      O\bl B\sqrt{T' \gamma_{T'}}  +  R\sqrt{T'\gamma_{T'} \bl
         \gamma_{T'} + 1 + \ln(1/\delta)
      \br} \br,
   \end{equation*}
   where $T' = mT$.
\end{thm}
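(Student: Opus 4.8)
The plan is to reduce the cumulative $\alpha$-regret to a sum of per-round uncertainties $\beta_{t-1}\sigma_{t-1}(S_t)$ and then bound that sum by a maximum-information-gain (elliptical-potential) argument. Throughout I would condition on the event $\mathcal{F}$ of Proposition \ref{prop-ucb-gen}, which holds with probability at least $1-\delta$ and absorbs all the randomness; every inequality below is then deterministic on $\mathcal{F}$.

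\textbf{Stage 1 (a good candidate is generated).} Fix a round $t$ and apply Proposition \ref{prop:sub-greedy} at context $\phi_t$ with $C=OPT_t$. The ideal threshold is $\rho^\ast = r\,f_{\phi_t}(OPT_t)$ with $r=\tfrac{2}{k+2l+1}$: a one-line computation using monotonicity, $f_{\phi_t}(S\cup OPT_t)\ge f_{\phi_t}(OPT_t)$, shows that for $\rho^\ast$ both branches of the $\min$ are at least $\tfrac{1}{k+2l+1}f_{\phi_t}(OPT_t)$. Since $\rho^\ast$ is unknown, I would instead use the grid point $\hat\rho$ produced in the while-loop of Algorithm \ref{algo:main} satisfying $\hat\rho\le\rho^\ast<(1+\varepsilon)\hat\rho$. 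The assumptions $\assumedbound\le\max_e f_{\phi_t}(\{e\})\le\assumedbound'$, together with $\max_e f_{\phi_t}(\{e\})\le f_{\phi_t}(OPT_t)\le|\ground|\max_e f_{\phi_t}(\{e\})$ (feasibility of singletons and subadditivity), ensure $\rho^\ast$ lies in the scanned range, so such a $\hat\rho$ exists. Repeating the two-case estimate with $\hat\rho$ loses only the factor $(1+\varepsilon)$, giving for the corresponding output $S'\in U$
\[
 f_{\phi_t}(S') + 2\beta_{t-1}\sigma_{t-1}(S') \ge \alpha\, f_{\phi_t}(OPT_t).
\]

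\textbf{Stage 2 (transfer to the selected list).} Because $S'\in U$ and $S_t=\argmax_{S\in U}\ucb_t(S)$, we have $\ucb_t(S_t)\ge\ucb_t(S')$. Applying Proposition \ref{prop-ucb-gen} termwise along a list gives $|\mu_{t-1}(S')-f_{\phi_t}(S')|\le\beta_{t-1}\sigma_{t-1}(S')$, so the factor $3$ in $\ucb_t(S')=\mu_{t-1}(S')+3\beta_{t-1}\sigma_{t-1}(S')$ is exactly what makes $\ucb_t(S')\ge f_{\phi_t}(S')+2\beta_{t-1}\sigma_{t-1}(S')\ge\alpha f_{\phi_t}(OPT_t)$; the same termwise bound applied to $S_t$ yields $\ucb_t(S_t)\le f_{\phi_t}(S_t)+4\beta_{t-1}\sigma_{t-1}(S_t)$. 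Chaining these produces the per-round inequality $\alpha f_{\phi_t}(OPT_t)-f_{\phi_t}(S_t)\le 4\beta_{t-1}\sigma_{t-1}(S_t)$.

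\textbf{Stage 3 (summation, and the main obstacle).} Summing over $t$ and using $\beta_{t-1}\le\beta_T$,
\[
 \regret{\alpha}{T}\le 4\beta_T\sum_{t=1}^{T}\sum_{i=1}^{m_t}\sigma_{t-1}(x_t^{(i)}),
\]
after which Cauchy--Schwarz over the $M_T\le mT$ summands reduces matters to bounding $\sum_{t,i}\sigma_{t-1}(x_t^{(i)})^2$ by the information gain $\gamma_{mT}$. The hard part is that $\sigma_{t-1}(x_t^{(i)})$ is computed from $K_{t-1}$, which is not refreshed as the $m_t$ elements of a round are appended, so the standard sequential elliptical-potential lemma does not apply directly. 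I would resolve this with an auxiliary bound established in the supplement (Lemma \lemsupnum{}): the at-most-$m$ intra-round rank-one updates, regularized by $\lambda\ge m$, inflate the quadratic form by only a bounded constant, so the stale sum $\sum_{t,i}\sigma_{t-1}(x_t^{(i)})^2$ stays within a constant factor of the fully sequential sum, which the information-gain lemma bounds by $O(\gamma_{mT})$. Collecting constants gives $\regret{\alpha}{T}\le 8\beta_T\sqrt{mT\gamma_{mT}}$, and inserting $\beta_T=B+R\sqrt{2(\gamma_{mT}+1+\ln(1/\delta))}$ yields the stated asymptotic expression.
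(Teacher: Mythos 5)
Your proposal is correct and follows essentially the same route as the paper: Stages 1--2 reproduce the paper's key lemma (that the candidate $S_{\hat\rho}$ generated at the grid point bracketing $r f_{\phi_t}(OPT_t)$ certifies $\alpha f_{\phi_t}(OPT_t) - f_{\phi_t}(S_t) \le 4\beta_{t-1}\sigma_{t-1}(S_t)$ via the UCB comparison and the factor $3$), and Stage 3 matches the paper's summation lemma, including the appeal to the intra-round staleness bound under $\lambda \ge m$ (the paper cites Lemma 5 of \citet{yue2011linear} for exactly this) followed by Cauchy--Schwarz against $\gamma_{mT}$. The constants also come out the same ($4\beta_T \cdot 2\sqrt{mT\gamma_{mT}} = 8\beta_T\sqrt{mT\gamma_{mT}}$), so there is nothing substantive to add.
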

\begin{rem}
   \begin{enumerate}
     \item The maximum information gain $\gamma_T$ is $O(d \ln T)$ and
     $O((\ln T)^{d + 1})$ if the kernel is a $d$-dimensional linear and
     Squared Exponential kernel, respectively \parencite{srinivas2010gaussian}.
     They also showed that a similar result for the Mat\`{e}rn kernel.
     Thus, if the kernel is a $d$-dimensional kernel, up to a polylogarithmic factor,
     we obtain Theorem 1 in the main article as a corollary.
     \item  In the main article, we assume that the norm of vector
     $[\Delta f_i(e \mid S)]_{i=1}^{d}$ is bounded above by $A$.
     Moreover, the factor $A$ appears in the regret bound in Theorem \ref{thm:main}.
     However, since we normalize the kernel so that $\kappa(x, x) \le 1$,
     a factor corresponding to $A$ does not appear in Theorem \ref{thm:main-gen}.
   \end{enumerate}
\end{rem}

\section{PROOF OF THE MAIN THEOREM}
Throughout the proof,
we assume that assumptions of Proposition \ref{prop-ucb-gen} hold
and fix the event $\mathcal{F}$ on which the inequality in Proposition \ref{prop-ucb-gen} holds.

\subsection{GREEDY ALGORITHM UNDER A $k$-SYSTEM CONSTRAINT}
In this subsection, we fix time step $t$ and context $\phi$
and consider the greedy algorithm under only $k$-system constraint as shown in
Algorithm \ref{algo:greedy}.
Here, we drop $\phi$ from notation.
We denote by $(\ground, \mathcal{I})$ the $k$-system.

\begin{algorithm}
   \label{algo:greedy}
   \SetKwInOut{Input}{Input}
   \SetKwInOut{Output}{Output}
   \DontPrintSemicolon
   \caption{GREEDY UCB}
   $S = \emptyset$\;
   \While{True}{
      $\ground' = \brl e \in \ground \mid S + e \in \mathcal{I} \brr$\;
      \uIf{$\ground' \neq \emptyset$}{
         $e = \argmax_{e \in \ground'} \ucb_t(e \mid \phi, S)$\;
         Add $e$ to $S$\;
      }\Else{
         break\;
      }
   }
   Return $S$\;
\end{algorithm}

\begin{prop}
   \label{prop:greedy-supp}
   Let $S$ be a set returned by Algorithm \ref{algo:greedy}.
   Then for any feasible set $C$, on the event $\propevent$,
   the following inequality holds:
   \begin{equation*}
      f(S) + \frac{2k\beta_{t - 1}}{k + 1}
      \sigma_{t-1}(S) \ge \frac{1}{k + 1} f(S \cup C).
   \end{equation*}
\end{prop}
\begin{proof}
   This can be proved in a similar way to the proof in \parencite[Appendix B]{calinescu2011maximizing}.
   We note the proof given in Appendix B works even if we replace the optimal solution $O$ by any feasible set $C$.
   We write $S = \brl e_1, \dots, e_m \brr$, where $e_1, \dots, e_m$ are added by
   Algorithm \ref{algo:greedy} with this order.
   We construct a partition $C_1, \dots, C_m$ of $C$ in
    the same way to the construction of $O_i$ in \parencite{calinescu2011maximizing}.
   Then $|C_i| \le k$ for any $i$ and $\setslc{S}{i - 1} + e$ is feasible for any $e \in C_i$.
   By the choice of the greedy algorithm and Proposition \ref{prop-ucb-gen},
   with probability at
   least $1 - \delta$, we have
   \begin{align*}
      &\Delta f(e \mid \setslc{S}{i - 1})\\
      &\le \mu_{t-1} (e \mid \setslc{S}{i-1}) +
      \beta_{t-1}\sigma_{t-1}(e \mid \setslc{S}{i-1})\\
      &\le \mu_{t-1} (e_i \mid \setslc{S}{i-1}) +
      \beta_{t-1}\sigma_{t-1}(e_i \mid \setslc{S}{i-1})\\
      & \le
      \Delta f(e_i \mid \setslc{S}{i - 1}) +
      2\beta_{t-1}\sigma_{t-1}(e_i \mid \setslc{S}{i-1}),
   \end{align*}
   for any $i$ and any $e \in C_i$.
   Here we use Proposition \ref{prop-ucb-gen} in the first and third inequality.
   The second inequality follows from the choice of the greedy algorithm and
   the fact that $\setslc{S}{i-1} + e$ is feasible.
   Noting that $|C_i| \le k$ and taking the sum of both sides for
   $e \in C_i$, we have
   \begin{align*}
      &k \bl
      \Delta f(e_i \mid \setslc{S}{i-1})  + 2\beta_{t-1}\sigma_{t-1}(e_i \mid \setslc{S}{i-1})
      \br \\
      & \ge
      \sum_{e \in C_i} \Delta f(e \mid \setslc{S}{i-1})\\
      &\ge \Delta f(C_i \mid \setslc{S}{i - 1}) \ge \Delta f(C_i \mid S).
   \end{align*}
   Here for subsets $A, B \in \ground$, we define $\Delta f(A \mid B) = f(A\cup B) - f(B)$
   and in the second and third inequality, we use the submodularity.
   By taking the sum of both sides, we have
   \begin{align*}
      &k \bl
      f(S) - f(\emptyset)
       + 2\beta_{t-1}\sum_{i=1}^{m}\sigma_{t-1}(e_i \mid \setslc{S}{i-1})
      \br \\
      &\ge \sum_{i=1}^{m}\Delta f(C_i \mid S)
      \ge \Delta f(C \mid S) = f(C \cup S) - f(S).
   \end{align*}
   Here the second inequality follows from submodularity of $f$.
   Thus by non-negativity of $f$, we have our assertion.
\end{proof}

\subsection{SOLUTIONS OF \algname{} AT EACH ROUND}
In this subsection, we assume that assumptions of Theorem \ref{thm:main-gen} are satisfied.
The objective in this subsection is to provide a lower bound of the score of
the set returned by \algname{} at each round.
In this subsection, we also fix time step $t$.

In the next proposition, we consider sets returned by \algnamei{}.
\begin{prop}
   \label{prop:sub-greedy-supp}
   Let $C \subseteq \ground$ be any set satisfying
   knapsack and
   $k$-system constraints.
   Let $S$ be a set returned by \algnamei{}.
   Then with probability at least $1-\delta$, we have
   \begin{equation*}
      f(S) + 2 \beta_{t-1}\sigma_{t-1}(S)
      \ge
      \min \brl
      \frac{\rho}{2},
      \frac{1}{k + 1} f(S \cup C) - \frac{l \rho}{k + 1}
      \brr.
   \end{equation*}
\end{prop}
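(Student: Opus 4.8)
The plan is to mirror the two-case structure already sketched for Proposition~\ref{prop:sub-greedy}, since Proposition~\ref{prop:sub-greedy-supp} is precisely its detailed (kernelized) counterpart. The algorithm \algnamei{} terminates only because the filtered candidate set $\ground_{S, \ge \rho}$ becomes empty. This happens for one of two reasons: either every element $e$ that passes the two modified-UCB thresholds $\ucb_t(e\mid S)/c(e) \ge \rho$ and $\ucb_t(e \mid \emptyset)/c(e) \ge \rho$ would violate some knapsack constraint (Case One), or no such element exists / every surviving element would violate the $k$-system membership $S + e \in \mathcal{I}$ (Case Two). These two cases are exhaustive, and I would handle them separately to establish the two terms of the $\min$.

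For \textbf{Case One}, the key is that every element actually selected into $S$ had modified UCB score at least $\rho$ at the time it was chosen, so summing the defining inequality $\ucb_t(e_i \mid \setslc{S}{i-1}) \ge \rho\, c(e_i)$ over the selected elements and invoking Proposition~\ref{prop-ucb-gen} to replace $\mu_{t-1}$ by $\Delta f$ (absorbing the error into $2\beta_{t-1}\sigma_{t-1}(S)$) yields $f(S) + 2\beta_{t-1}\sigma_{t-1}(S) \ge \rho\, c(S)$. Then, fixing an offending element $e'$ with $\ucb_t(e' \mid \emptyset) \ge \rho\, c(e')$ and $c_j(S+e') > 1$ for some $j$, I combine $f(S)+2\beta_{t-1}\sigma_{t-1}(S) \ge \rho\, c(e')$ with the previous bound to get $f(S)+2\beta_{t-1}\sigma_{t-1}(S) \ge \tfrac{\rho}{2}\, c(S+e')$; using $c(e) = \sum_{j=1}^{l} c_j(e)$ and the violated constraint $c_j(S+e') > 1$ gives $c(S+e') \ge 1$, hence the bound is at least $\rho/2$.

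For \textbf{Case Two}, I split the competitor $C$ into $C_{<\rho}$ (elements dropping below threshold at some prefix) and $C_{\ge\rho} = C \setminus C_{<\rho}$, exactly as in the sketch. The contribution of $C_{<\rho}$ is controlled by inequality~\eqref{eq:crhorho}: submodularity plus the threshold failure plus Proposition~\ref{prop-ucb-gen} bound $\Delta f(C_{<\rho}\mid S) \le l\rho$ (the knapsack count caps the sum of costs by $l$). For $C_{\ge\rho}$ I observe that running the pure greedy UCB algorithm (Algorithm~\ref{algo:greedy}) on the ground set $S \cup C_{\ge\rho}$ under the $k$-system constraint alone would return exactly $S$ — this is the crucial reduction — so Proposition~\ref{prop:greedy-supp} applies verbatim with $C$ replaced by $C_{\ge\rho}$, giving $f(S) + \tfrac{2k\beta_{t-1}}{k+1}\sigma_{t-1}(S) \ge \tfrac{1}{k+1} f(S \cup C_{\ge\rho})$. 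Finally I recombine via submodularity: $f(S\cup C) \le f(S \cup C_{\ge\rho}) + \Delta f(C_{<\rho}\mid S) \le f(S\cup C_{\ge\rho}) + l\rho$, which after rearranging yields $f(S) + 2\beta_{t-1}\sigma_{t-1}(S) \ge \tfrac{1}{k+1}f(S\cup C) - \tfrac{l\rho}{k+1}$.

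The main obstacle I anticipate is justifying the reduction in Case Two that greedy UCB on $S \cup C_{\ge\rho}$ reproduces $S$: one must verify that at every step the UCB-maximizing feasible element among $S \cup C_{\ge\rho}$ is indeed the element \algnamei{} picked, which requires that no element of $C_{\ge\rho}$ ever had a strictly larger UCB score than the chosen $e_i$ while being $k$-system-feasible. This hinges on the $C_{\ge\rho}$ definition (every such element keeps modified UCB score $\ge \rho$ along all prefixes) interacting correctly with the termination condition, and is exactly where the lack of a diminishing-returns property for $\ucb_t(e\mid S)$ forces the extra filter $\ucb_t(e\mid\emptyset)/c(e) \ge \rho$ to do its work. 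I would treat this matching argument carefully, as it is the step where the bandit setting genuinely departs from the offline proofs of \parencite{badanidiyuru2014fast} and \parencite{calinescu2011maximizing}.
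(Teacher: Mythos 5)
Your proposal follows essentially the same route as the paper's proof: the identical two-case split on why $\ground_{S,\ge\rho}$ empties, the same summation argument giving $f(S)+2\beta_{t-1}\sigma_{t-1}(S)\ge\rho\,c(S)$ and the $\tfrac{\rho}{2}(c(S)+c(e'))\ge\tfrac{\rho}{2}$ conclusion in Case One, and the same $C_{<\rho}/C_{\ge\rho}$ decomposition with the reduction to Proposition~\ref{prop:greedy-supp} in Case Two, including a correct identification of why the greedy-matching step is the delicate point. The only detail you assert without justification is $f(S)+2\beta_{t-1}\sigma_{t-1}(S)\ge\rho\,c(e')$ in Case One, which the paper obtains from monotonicity together with the chain $f(S)+2\beta_{t-1}\sigma_{t-1}(S)\ge\ucb_t(e_1\mid\emptyset)\ge\ucb_t(e'\mid\emptyset)$ using that $e_1$ maximized the UCB over the first-step candidate set containing $e'$.
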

\begin{proof}
   This can be proved in a similar way to the proof of
   \parencite[Theorem 6.1]{badanidiyuru2014fast} or
   \parencite[Theorem 5.1]{mirzasoleiman2016fast}.
   However, because of uncertainty, we need further analysis.
   We divide the proof into two cases.

   \textbf{Case One}.
   This is the case when \algnamei{} terminates because
   there exists an element $e$
   such that $\ucb_t(e \mid S) \ge \rho c(e)$
   and $\ucb_t(e \mid \emptyset) \ge \rho c(e)$,
   but any element $e$ satisfying
   $\ucb_t(e \mid S),\ucb_t(e \mid \emptyset) \ge \rho c(e)$
   does not satisfy the knapsack constraints, i.e., $c_j(S + e) > 1$ for some $1 \le j \le l$.
   We fix an element $e'$ satisfying $\ucb_t(e' \mid S), \ucb_t(e' \mid \emptyset) \ge \rho c(e')$.
   We write $S = \brl e_1, \dots, e_m \brr$.
   Then by assumption, for any $i$, we have $\ucb_t(e_i \mid \setslc{S}{i-1})\ge \rho c(e_i)$.
   By Proposition \ref{prop-ucb-gen},
   on the event $\propevent$,
   we have $ \Delta f(e_i \mid \setslc{S}{i-1}) + 2\beta_{t-1}
   \sigma_{t-1}(e_i \mid \setslc{S}{i-1}) \ge \rho c(e_i)$ for any $t, i$.
   By summing the both sides, we obtain
   \begin{equation*}
      f (S) + 2\beta_{t-1}\sigma_{t-1}(S) \ge \rho c(S).
   \end{equation*}
   On the event $\propevent$, we also have
   \begin{align*}
     &f(S) + 2 \beta_{t-1} \sigma_{t-1}(S) \ge \\
     & \quad  \ucb_t(e_1 \mid \emptyset) \ge \ucb_t(e' \mid \emptyset) \ge \rho c(e').
   \end{align*}
   Therefore, we have
   \begin{align*}
     &f(S) + 2 \beta_{t-1} \sigma_{t-1}(S) \ge
      \rho \max \brl c(S), c(e')\brr \ge
      \frac{\rho}{2} \bl c(S) + c(e') \br\\
     &= \frac{\rho}{2} \sum_{j}\bl c_j(S) + c_j(e') \br
      \ge \frac{\rho}{2}.
   \end{align*}
   Here the last inequality holds because $S+e'$ does not satisfy the knapsack constraints.

   \textbf{Case Two}.
   This is the case when \algnamei{} terminates because
   for any element $e$ satisfying $\ucb_t(e \mid S),\ucb_t(e \mid \emptyset) \ge \rho c(e)$,
   $e$ satisfies knapsack constraints
   but $S + e$ does not satisfies the $k$-system constraint.
   We note that this case includes the case
   when there does not exist an element $e$ satisfying
   $\ucb_t(e \mid S),\ucb_t(e \mid \emptyset) \ge \rho c(e)$.

   We divide $C$ into two sets $C_{< \rho}$
   and $C_{\ge \rho}$, that is, we define
   \begin{align*}
   C_{< \rho} &= \brl e \in C \mid \exists i \text{ such that }
   \ucb_t(e \mid \setslc{S}{i}) < \rho c(e)
   \brr,\\
   C_{\ge \rho} &= C \setminus C_{< \rho}\\
   &= \brl
   e \in C \mid \forall i \quad
   \ucb_t(e \mid \setslc{S}{i}) \ge \rho c(e)
   \brr.
   \end{align*}
   Let $e \in C_{< \rho}$.
   Then on the event $\propevent$, we have
   $\Delta f(e \mid S) \le  \Delta f(e \mid \setslc{S}{i})
   \le \ucb_t(e \mid \setslc{S}{i}) < \rho c(e)$ for some $i$.
   Here the first inequality follows from submodularity and
   the second inequality follows from Proposition \ref{prop-ucb-gen}.
   Therefore, the following inequality holds:
   \begin{equation}
      \label{eq:crhorho-supp}
      \Delta f(C_{< \rho} \mid S) \le
      \sum_{e \in C_{< \rho}}  \Delta f(e \mid S)
      \le \sum_{e \in C_{< \rho}}  \rho c(e)
      \le l \rho.
   \end{equation}
   Here the first inequality follows from the submodularity of $f$
   and the second inequality follows from the fact that $C_{< \rho}$ is a feasible set.

   Next, we consider $C_{\ge \rho}$.
   We run Algorithm \ref{algo:greedy} on $S \cup C_{\ge \rho}$.
   By assumption of this case, Algorithm \ref{algo:greedy} returns $S$.
   Proposition \ref{prop:greedy-supp} implies that
   \begin{math}
      f(S) + 2\beta_{t-1}\sigma_{t-1}(S) \ge
      \frac{1}{k + 1}f(S \cup C_{\ge \rho}).
   \end{math}
   Rewriting, we have
   \begin{equation*}
      \Delta f(C_{\ge \rho} \mid S)  \le
      kf(S) + 2(k + 1)\beta_{t-1} \sigma_{t-1}(S).
   \end{equation*}
   By this, equation \eqref{eq:crhorho}, and submodularity of $f$,
   we have
   \begin{align*}
      &\Delta f(C \mid S)
      \le
      \Delta f(C_{< \rho} \mid S)  +
      \Delta f(C_{\ge \rho} \mid S)\\
      & \le l\rho + kf(S) + 2(k + 1)\beta_{t-1} \sigma_{t-1}(S).
   \end{align*}
   Noting that $\Delta f(C \mid S) = f(C \cup S) - f(S)$, we have
   \begin{equation*}
      f(S) + 2\beta_{t-1} \sigma_{t-1}(S)
      \ge \frac{1}{k + 1}f(S \cup C) - \frac{l\rho}{k + 1}.
   \end{equation*}
   This completes the proof.
\end{proof}

The following is a key lemma for the proof of our main Theorem.
\begin{lem}
   \label{lem:fantom}
   Let $\alpha = \frac{1}{(1+\varepsilon) \bl k + 2l + 1\br}$.
   Let $S$ be the set returned by \algname{} and $OPT$ the optimal feasible set.
   Then on the event $\propevent$, $S$ satisfies the following:
   \begin{equation*}
      f(S) + 4\beta_{t-1}\sigma_{t-1}(S) \ge \alpha f (OPT).
   \end{equation*}
\end{lem}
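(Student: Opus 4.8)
The plan is to reduce the claim to two ingredients: (i) a pointwise comparison showing that on $\propevent$ the list score $\ucb_t(S)$ is sandwiched between $f(S)+2\beta_{t-1}\sigma_{t-1}(S)$ and $f(S)+4\beta_{t-1}\sigma_{t-1}(S)$, and (ii) the existence, among the candidate thresholds scanned by \algname{}, of one value $\rho$ for which Proposition~\ref{prop:sub-greedy-supp} (applied with $C=OPT$) already certifies $f(S^{(\rho)})+2\beta_{t-1}\sigma_{t-1}(S^{(\rho)})\ge \alpha f(OPT)$. Combining these with the fact that \algname{} returns the maximizer of $\ucb_t$ over $U$ yields the lemma, and the gap between the factor $2$ in Proposition~\ref{prop:sub-greedy-supp} and the factor $4$ in the statement is exactly what the sandwich absorbs.

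For ingredient (i) I would note that by definition $\mu_{t-1}(S)$ and $\sigma_{t-1}(S)$ are the telescoping sums of $\mu_{t-1}(e^{(i)}\mid \setslc{S}{i-1})$ and $\sigma_{t-1}(e^{(i)}\mid \setslc{S}{i-1})$, while $f(S)=\sum_i\Delta f(e^{(i)}\mid \setslc{S}{i-1})$ (taking $f(\emptyset)=0$). Summing the per-element bound of Proposition~\ref{prop-ucb-gen} over the elements of $S$ gives $|\mu_{t-1}(S)-f(S)|\le \beta_{t-1}\sigma_{t-1}(S)$ on $\propevent$; since $\ucb_t(S)=\mu_{t-1}(S)+3\beta_{t-1}\sigma_{t-1}(S)$ this immediately yields
\[ f(S)+2\beta_{t-1}\sigma_{t-1}(S)\ \le\ \ucb_t(S)\ \le\ f(S)+4\beta_{t-1}\sigma_{t-1}(S). \]
Writing $S=S_t$ for the selected list and $S^{(\rho)}$ for any candidate in $U$, the argmax property gives $\ucb_t(S_t)\ge \ucb_t(S^{(\rho)})$, so chaining the upper sandwich on $S_t$ with the lower sandwich on $S^{(\rho)}$ produces $f(S_t)+4\beta_{t-1}\sigma_{t-1}(S_t)\ge f(S^{(\rho)})+2\beta_{t-1}\sigma_{t-1}(S^{(\rho)})$ for every grid threshold $\rho$.

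For ingredient (ii) --- which I expect to be the main obstacle --- I would set $O=f(OPT)$ and hunt for a grid threshold making both terms of the minimum in Proposition~\ref{prop:sub-greedy-supp} at least $\alpha O$. Using monotonicity ($f(S^{(\rho)}\cup OPT)\ge O$), the two requirements become $\rho\ge 2\alpha O$ and $\rho\le (1-(k+1)\alpha)O/l$; with $\alpha=\tfrac{1}{(1+\varepsilon)(k+2l+1)}$ and $r=\tfrac{2}{k+2l+1}$ one checks $2\alpha O=r(1+\varepsilon)^{-1}O$, and the admissible window is nonempty because $\alpha(k+2l+1)<1$. I would then take $\rho$ to be the smallest grid point $\ge 2\alpha O$; the geometric spacing forces $2\alpha O\le\rho<(1+\varepsilon)2\alpha O=rO$, so $\rho/2\ge\alpha O$, and since $rO<(1-(k+1)\alpha)O/l$ (which reduces to $1-lr=(k+1)/(k+2l+1)$, giving $\tfrac{1}{k+1}(O-l\rho)>\tfrac{O}{k+2l+1}>\alpha O$) the second term is also $\ge\alpha O$. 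The remaining bookkeeping is to confirm this $\rho$ actually lies in the range covered by the while-loop: the lower end holds since $\nu\le O$ gives $r(1+\varepsilon)^{-1}\nu\le 2\alpha O\le\rho$, and the upper end holds since subadditivity gives $O\le|\ground|\nu'$, whence $\rho<rO\le r\nu'|\ground|$. Thus $S^{(\rho)}\in U$, Proposition~\ref{prop:sub-greedy-supp} applies, and combining with ingredient (i) closes the proof; the one hypothesis worth flagging is $\nu\le O$, which follows from $\nu\le\max_e f(\{e\})$ together with feasibility of singletons.
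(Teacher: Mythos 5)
Your proposal is correct and follows essentially the same route as the paper's proof: it locates a grid threshold $\rho^*$ in the window $[(1+\varepsilon)^{-1}rf(OPT),\, rf(OPT))$ (guaranteed to be scanned because $\nu \le f(OPT) \le \nu'|\ground|$), applies Proposition~\ref{prop:sub-greedy-supp} with $C=OPT$ so that both branches of the minimum are at least $\alpha f(OPT)$, and transfers the bound to the returned set via the sandwich $f(S)+2\beta_{t-1}\sigma_{t-1}(S) \le \ucb_t(S) \le f(S)+4\beta_{t-1}\sigma_{t-1}(S)$ together with the argmax selection rule. Your write-up is in fact slightly more explicit than the paper's on why both branches of the minimum clear $\alpha f(OPT)$ and on the role of the factor-$3$ definition of $\ucb_t(S)$, but the argument is the same.
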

\begin{proof}
  By monotonicity and assumption,
  we have $\assumedbound \le \max_{e \in \ground} f(\{e\}) \le f(OPT)$.
  Here, we note that by assumptions any singleton $\{e\}$ for $e \in \ground$
  is a feasible set.
  By submodularity of $f$ and the assumption of $\assumedbound'$, we have
  \begin{equation*}
      f(OPT) \le \sum_{e \in OPT}f(\{e\}) \le |OPT| \assumedbound' \le |\ground|\assumedbound'.
  \end{equation*}
  We put $r = \frac{2}{k + 2l + 1}$
  and $r_{OPT} = r f(OPT)$.
  By bounds of $f(OPT)$ above,
  there exists $\rho$ in the while loop in \algname{}
  such that
  $(1 + \varepsilon)^{-1}r_{OPT} \le \rho \le r_{OPT}$.
  We denote such a $\rho$ by $\rho^{*}$.
  Moreover, if $\rho = r_{OPT}$, then
  we have $\rho/2 = \frac{1}{k + 1} f(OPT) - \frac{l\rho}{k + 1}$.
  By Proposition \ref{prop:sub-greedy-supp}, on the event $\propevent$, we have
  \begin{equation*}
     \mu_{t-1}(S_{\rho^{*}}) + 3 \beta_{t-1} \sigma_{t-1}(S_{\rho^{*}})
     \ge
     f(S_{\rho^{*}}) + 2 \beta_{t-1} \sigma_{t-1}(S_{\rho^{*}})
     \ge \frac{r_{OPT}}{2(1 + \varepsilon)}
     = \alpha f(OPT).
  \end{equation*}
  Here we denote by $S_\rho$ the returned set of \algnamei{} with threshold $\rho$.
  Because \algname{} returns a set with the largest UCB score in $\{S_{\rho}\}_{\rho}$,
  we have our assertion.
\end{proof}

\begin{rem}
   Suppose that the cardinality of any feasible solution
   is bounded by $m \in \ZZ_{>0}$.
   Then, by the proof of Lemma \ref{lem:fantom}, we see that
   in \algname{},
   the condition in the while loop $\rho \le r \assumedbound' |\ground|$
   can be replaced to $\rho \le r \assumedbound' m$.
   We obtain the same theoretical guarantee for this modified algorithm.
   Following FANTOM \parencite{mirzasoleiman2016fast},
   we use the condition $\rho \le r \assumedbound' |\ground|$.
\end{rem}
\subsection{PROOF OF THE MAIN THEOREM}
In this subsection, first, we introduce a lemma similar to
\parencite[Lemma 4]{chowdhury2017kernelized} and prove the main Theorem.
\begin{lem}
   \label{lem:sumsigma}
   Let $S_t$ be the set selected by \algname{} at time step $t$.
   We assume that $\lambda \ge m$.
   Then, we have
   \begin{equation*}
      \sum_{t = 1}^{T}\sigma_{t-1}(S_t \mid \phi_t) \le 2\sqrt{m T\gamma_{mT}}.
   \end{equation*}
\end{lem}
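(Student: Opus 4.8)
The plan is to follow the template of \parencite[Lemma 4]{chowdhury2017kernelized}, but to account for the fact that \algname{} queries a whole list of up to $m$ points per round, all scored with the posterior from the end of round $t-1$. First I would expand $\sigma_{t-1}(S_t\mid\phi_t)=\sum_{i=1}^{m_t}\sigma_{t-1}(e_t^{(i)}\mid\phi_t,\setslc{S_t}{i-1})$ and apply Cauchy--Schwarz over the $\sum_{t=1}^T m_t\le mT$ individual queries, reducing the claim to a bound on the sum of posterior variances:
\begin{equation*}
\sum_{t=1}^T\sigma_{t-1}(S_t\mid\phi_t)\le\sqrt{mT}\,\Bigl(\sum_{t=1}^T\sum_{i=1}^{m_t}\sigma_{t-1}^2(x_t^{(i)})\Bigr)^{1/2},\qquad x_t^{(i)}=(\phi_t,\setslc{S_t}{i-1},e_t^{(i)}).
\end{equation*}
It then suffices to prove $\sum_{t,i}\sigma_{t-1}^2(x_t^{(i)})\le 4\gamma_{mT}$.

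To control the variance sum I would pass to the covariance-operator representation. Writing $\varphi_t^{(i)}$ for the feature of $x_t^{(i)}$ and $V_t=\lambda I+\sum_{s\le t}\sum_i \varphi_s^{(i)}\otimes\varphi_s^{(i)}$ for the regularized covariance operator after round $t$, the per-round increment is $\Delta_t=\sum_{i=1}^{m_t}\varphi_t^{(i)}\otimes\varphi_t^{(i)}$, so $V_t=V_{t-1}+\Delta_t$ and, up to the normalization fixed by the definition of $\sigma_t$,
\begin{equation*}
\sum_{i=1}^{m_t}\sigma_{t-1}^2(x_t^{(i)})=\Tr\bigl(V_{t-1}^{-1}\Delta_t\bigr)=\Tr(B_t),\qquad B_t=V_{t-1}^{-1/2}\Delta_t V_{t-1}^{-1/2}.
\end{equation*}
Because $\kappa(x,x)\le 1$ and $m_t\le m$, the increment obeys $\|\Delta_t\|\le\Tr(\Delta_t)\le m$, while $V_{t-1}\succeq\lambda I$; hence every eigenvalue of $B_t$ is at most $\lambda^{-1}m\le 1$ under the hypothesis $\lambda\ge m$. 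This is precisely where the assumption $\lambda\ge m$ enters.

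Next, on $[0,1]$ one has $z\le(\ln 2)^{-1}\ln(1+z)$, so applying this to each eigenvalue of $B_t$ gives $\Tr(B_t)\le(\ln 2)^{-1}\ln\det(I+B_t)=(\ln 2)^{-1}\ln(\det V_t/\det V_{t-1})$. Telescoping over $t$ collapses to $(\ln 2)^{-1}\ln(\det V_T/\det V_0)=(\ln 2)^{-1}\ln\det(I+\lambda^{-1}K_T)$ by the push-through (Weinstein--Aronszajn) identity, and the right-hand side is at most $2\gamma_{mT}$ by the definition of the maximum information gain together with $M_T=\sum_t m_t\le mT$ and monotonicity of $\gamma$. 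Thus $\sum_{t,i}\sigma_{t-1}^2(x_t^{(i)})\le 2(\ln 2)^{-1}\gamma_{mT}\le 4\gamma_{mT}$, and combining with the Cauchy--Schwarz step yields the claimed bound $2\sqrt{mT\gamma_{mT}}$.

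The main obstacle is the middle step. Unlike the single-query analysis of \parencite{chowdhury2017kernelized}, the scores within a round all use the stale posterior $V_{t-1}$, so the naive per-observation determinant telescoping (which would require the variance to be recomputed after each addition) is unavailable and would in fact point in the unfavourable direction, since freezing the posterior only enlarges the variances. Treating the whole round increment $\Delta_t$ as one block and bounding the eigenvalues of $B_t$ by $1$ via $\lambda\ge m$ is what keeps the trace-versus-log-determinant comparison tight and free of any spurious $\lambda$-dependent constant. I would verify the eigenvalue bound and the exact normalization constant in the definition of $\sigma_t^2$ with care, as that is the one place an off-by-$\lambda$ factor could slip in.
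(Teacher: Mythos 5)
Your proposal is correct and follows the same overall strategy as the paper: expand $\sigma_{t-1}(S_t\mid\phi_t)$ into its $m_t\le m$ per-item terms, apply Cauchy--Schwarz over the at most $mT$ individual queries, and reduce everything to the bound $\sum_{t,i}\sigma_{t-1}^2(x_t^{(i)})\le 4\gamma_{mT}$. The difference is in how that inner bound is established. The paper simply cites Lemma 5 of \parencite{yue2011linear} for the linear-kernel case and then extends to general positive definite kernels by expanding the kernel in an orthonormal basis and taking a limit of the linear-kernel inequality. You instead prove the inner bound from scratch by treating the whole round-$t$ increment $\Delta_t$ as a single block, bounding the eigenvalues of $V_{t-1}^{-1/2}\Delta_t V_{t-1}^{-1/2}$ by $m/\lambda\le 1$ (which is exactly where $\lambda\ge m$ is used, also in the cited lemma), comparing trace to log-determinant via $z\le(\ln 2)^{-1}\ln(1+z)$ on $[0,1]$, and telescoping; the Weinstein--Aronszajn identity then lets you work directly with the finite Gram matrix, so no separate limiting argument is needed for infinite-dimensional RKHSs. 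This is essentially the content of the lemma the paper outsources, reconstructed correctly, and your version is self-contained and marginally tighter in the constant. Your closing caveat about the normalization of $\sigma_t^2$ is well placed: in the kernelized definition one has $\sigma_t^2(x)=\lambda\,\varphi(x)^{\trn}V_t^{-1}\varphi(x)$, so a factor of $\lambda$ does appear relative to the linear-bandit convention $x^{\trn}M_t^{-1}x$ used in the main article; the paper's own proof glosses over this when invoking the linear-case lemma, so tracking that constant carefully, as you propose, is the right thing to do and is not a defect of your argument.
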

\begin{proof}
   First, suppose the kernel is a linear kernel.
   Then by \parencite[Lemma 5]{yue2011linear}, the definition of $\gamma_T$,
   and monotonicity of $T \mapsto \gamma_T$
   \parencite{krause2012near},
   we have
   \begin{math}
      \sum_{t=1}^T \sigma_{t-1}^2(S_t \mid \phi_t)
      \le
      4 \gamma_{mT}.
   \end{math}
   And the statement of the lemma follows from this inequality and the Cauchy-Schwartz inequality.
   Suppose that the kernel is a positive definite.
   Let $\{\psi_i\}_{i=1}^{\infty}$ be an orthonormal basis of the RKHS.
   Then for $x \in D$, we have $\kappa(x, \cdot) = \sum_{i=1}^{\infty}a_i(x)\psi_i$,
   where $a_i(x) \in \RR$.
   Therefore, for each $x, y \in D$, we have $\kappa(x, y) = \sum_{i=1}^{\infty}a_i(x)a_i(y)$.
   Since for fixed $T$,
   only finitely many values of $\kappa$ are appeared in the both sides of the inequality of the lemma,
   the statement for positive definite kernels can be proved by taking the limit of the inequality in the linear kernel case
   (see the proof of \parencite[Theorem 2]{chowdhury2017kernelized}).
\end{proof}

\begin{proof}[Proof of the main Theorem]
   Because $t \mapsto \beta_{t}$ is non-decreasing and by Lemma \ref{lem:fantom},
   we see that the $\alpha$-regret is bounded by
   \begin{math}
      4 \beta_T \sum_{t=1}^{T}\sigma(S_t \mid \phi)
   \end{math}
   with probability at least $1-\delta$ .
   Then the assertion of the main result follows from Lemma \ref{lem:sumsigma}.
   We also note that the second statement in the main article follows from the proof of
   \parencite[Theorem 8]{srinivas2010gaussian}.
\end{proof}
\begin{rem}
   We consider only stationary constraints for simplicity, i.e.,
   the constraints \eqref{eq:the-constraints} do not depend on time step $t$.
   However, it is clear from the proof that
   only $k$ and $l$ should be stationary and we can consider non-stationary constraints.
\end{rem}
\section{PROOF OF PROPOSITION \ref{prop:greedy-linear}}
In this section, we prove that MCSGreedy \parencite{yu2016linear} and LSBGreedy \parencite{yue2011linear}
perform arbitrary poorly in some situations.
We denote by $\regretmcs(T)$ and $\regretlsb(T)$
the $\alpha$-regret of MCSGreedy and LSBGreedy respectively.
\begin{prop}
   For any $\alpha > 0$,
   there exists cost $c$, $k$-system $\mathcal{I}$, a submodular function $f$,
   $T_0 > 0$ and a constant $C > 0$ such that
   with probability at least $1-\delta$,
   \begin{equation*}
      \regretmcs(T) > C T,
   \end{equation*}
   for any $T > T_0$.
   Moreover, the same statement holds for $\regretlsb(T)$.
\end{prop}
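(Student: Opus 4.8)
The plan is to reduce the online question to a worst-case \emph{offline} instance and then transfer the bad offline behavior to the bandit algorithm using the confidence bounds of Proposition \ref{prop-ucb}. Concretely, I would construct two explicit instances, one tailored to each algorithm, on which the corresponding offline greedy rule (using the \emph{true} marginal gains) returns a feasible set whose value is a vanishing fraction of $f(OPT)$, strictly below $\alpha f(OPT)$ by a positive additive margin. For LSBGreedy, which ranks elements by the un-normalized UCB and is therefore cost-blind, I would use a knapsack trap: a single expensive item $a$ with $c(a)=1$ and value $V$, together with $N$ cheap items each of cost $1/N$ and value $V-\eta$ (with $\eta$ small and the $k$-system made non-binding, e.g.\ the free matroid). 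Offline LSBGreedy picks $a$ first, exhausts the budget, and halts with value $V$, whereas $OPT$ takes all $N$ cheap items for value $\approx NV$. For MCSGreedy, which ranks by the modified (cost-normalized) UCB, I would instead make the knapsack slack and use the $k$-system as the trap: a partition-matroid group $\{a,g\}$ admitting a single element, with $c(a)=\epsilon$ tiny and value $1$, and $c(g)=1$ with value $M$; the modified score of $a$ is $1/\epsilon \gg M$, so offline MCSGreedy selects $a$ and blocks the far better $g$.

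Given $\alpha$, I would then fix the free parameters ($N,\eta,M,\epsilon$, and, if needed, the system parameter $k$) so that $f(\text{greedy})/f(OPT)<\alpha$ with a uniform additive gap. For instance, taking $M=2/\alpha$ and $\epsilon<1/M$ yields $\alpha f(OPT)-f(\{a\})=2-1=1>0$ for the MCSGreedy instance, and taking $N>2/\alpha$ yields an analogous constant gap $C_0>0$ for the LSBGreedy instance. Realizing $f$ as a linear combination $\sum_i w_i f_i$ of known submodular (indeed modular) functions is immediate, so each instance is admissible for the model of Section \ref{sec:prob-form}; this part is a routine verification once the instances are written down.

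The substantive step is the online-to-offline transfer. I would condition on the event $\propevent$ of Proposition \ref{prop-ucb} (which holds with probability at least $1-\delta$), on which every estimate obeys $|\mu_{t-1}(e\mid S)-\Delta f(e\mid S)|\le \beta_{t-1}\sigma_{t-1}(e\mid S)$. Call a round \emph{bad} if the algorithm selects the trap set, so its per-round regret is $\ge \alpha f(OPT)-f(\text{greedy})\ge C_0$, and \emph{exploratory} otherwise. In any round where the confidence widths $\beta_{t-1}\sigma_{t-1}(e)$ of all relevant elements lie below the constant gap built into the instance, the UCB (resp.\ modified-UCB) ordering coincides with the true-value (resp.\ true value/cost) ordering, so the greedy rule is forced to reproduce its offline choice; hence every non-exploratory round is bad. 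I would finish by bounding the number of exploratory rounds: such a round occurs only when some element still carries a large confidence width, and the summability of these widths, which is exactly the estimate behind Lemma \ref{lem:sumsigma} (equivalently the maximum-information-gain bound, polylogarithmic in $T$ for the linear kernel), forces their count to be sublinear. Writing $\regret{\alpha}{T}\ge C_0\,(\#\text{bad rounds}) - O(f(OPT))\,(\#\text{exploratory rounds}) \ge C_0 T - o(T)$ then gives $\regretmcs(T)>CT$ and $\regretlsb(T)>CT$ for a constant $C\in(0,C_0)$ and all $T>T_0$.

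The main obstacle is precisely this transfer, and it is also the reason a naive argument fails: because an \emph{unexplored} high-value element has an inflated UCB (large $\beta_{t-1}\sigma_{t-1}$), the greedy rule can temporarily escape the trap, so one cannot claim the bad set is chosen in \emph{every} round. The crux is therefore to show that these escapes are sublinearly rare with high probability, which is where the confidence-width summability of Lemma \ref{lem:sumsigma} is indispensable; one must also design the instance so that the decisive UCB comparison has a \emph{strict} constant gap, guaranteeing that once the widths drop below that gap the trap selection is deterministic. I expect the accounting of exploratory rounds under the slow resurgence of $\beta_t$ (which grows like $\sqrt{\ln T}$) to be the most delicate point, but it is controlled by the same $\sqrt{mT\gamma_{mT}}$ estimate already used in the proof of Theorem \ref{thm:main}.
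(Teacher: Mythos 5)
Your overall strategy---construct an offline instance on which the greedy rule's output falls short of $\alpha f(OPT)$ by a fixed margin, realize $f$ as a modular function $\sum_e v(e)\chi_e$ so the problem reduces to a standard MAB with orthogonal features, and then use the confidence bounds on the event of Proposition \ref{prop-ucb} to show that the online algorithm reproduces the offline greedy choice in all but sublinearly many rounds---is exactly the paper's strategy. The paper's version of your ``escape counting'' step is to note that a wrong-order UCB comparison forces $2m\sigma_{e,t}>\epsilon$, which (via the per-arm confidence width of Abbasi-Yadkori et al.) caps the number of times each high-value element can be selected by a constant $N(\delta,\epsilon,m)$; your appeal to the width-summability behind Lemma \ref{lem:sumsigma} buys the same sublinearity. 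Your LSBGreedy instance (cost-blind rule walking into a knapsack trap) matches what the paper sketches and defers as ``similar and simpler.''

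The genuine gap is in your MCSGreedy instance. MCSGreedy does not purely follow the modified-UCB rule: as the paper's proof has to account for, it chooses ``the first three elements'' by a separate partial-enumeration step and only then extends greedily by $\ucb_t(e\mid S)/c(e)$. On your two-element instance $\brl a,g\brr$ that enumeration examines the singleton $\brl g \brr$ and can simply return it, so the trap never springs and the regret is zero. The paper sidesteps this by making the instance large: $\ground=\ground_1\sqcup\ground_2$ with $|\ground_1|=|\ground_2|=m$, a cardinality constraint $|S|\le m$, costs $1/m$ versus $1/m^2$, and values $1/m$ versus $(1+\epsilon)/m^2$, so the modified rule fills the $m$ available slots with $\ground_2$ elements of total value $(1+\epsilon)/m$ while $f(OPT)=f(\ground_1)=1$, and the three specially chosen elements contribute at most $3/m$, absorbed by choosing $m$ with $(m-3)(1+\epsilon)/m^2+3/m<\alpha$. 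Your construction is repairable by the same scaling (many matroid groups, or a binding cardinality constraint), but as written the MCSGreedy half of your argument does not go through against the algorithm actually being analyzed.
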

\begin{proof}
   We prove the statement only for MCSGreedy.
   We skip the proof for LSBGreedy because it is similar and simpler.
   We take a $k$-system constraint as the cardinality constraint
   $|S| \le m$, where we choose $m$ later.
   We consider $\ground$ as the disjoint union of $\ground_1$ and $\ground_2$,
   where $|\ground_1| = |\ground_2| = m$.
   We define cost $c: \ground \rightarrow \RR$ as
   \begin{equation*}
      c(e)  =
      \begin{cases}
         \frac{1}{m} & \text{if } e \in \ground_1,\\
         \frac{1}{m^2}
         & \text{if } e \in \ground_2.
      \end{cases}
   \end{equation*}
   For each $e \in \ground$, we define $v_e$ as follows:
   \begin{equation*}
      v(e) =
      \begin{cases}
         \frac{1}{m} & \text{if } e \in \ground_1,\\
         \frac{1 + \epsilon}{m^2} & \text{if } e \in \ground_2.
      \end{cases}
   \end{equation*}
   Here $\epsilon$ is a small positive number.
   We assume that the objective function $f: 2^{\ground} \rightarrow \RR$ is a modular function, i.e,
   $f(A) = \sum_{e \in A} v(e)$ for $A \subseteq \ground$.
   For $e \in \ground$, we define a function $\chi_e : 2^{\ground} \rightarrow \RR$ as
   \begin{equation*}
      \chi_e(A) = \begin{cases}
         1 & \text{if } e \in A,\\
         0 & \text{otherwise}
      \end{cases}
   \end{equation*}
   Then we have $f(A) = \sum_{e \in \ground} v(e) \chi_e(A)$.
   Therefore, this is the linear kernel case.
   Moreover, because feature vectors $[\chi_e(e' | A)]_{e \in \ground}$,
   for $e' \not \in A$ are orthogonal,
   this case can be reduced to the MAB setting.
   Therefore, we can use the UCB in Lemma 6 \parencite{abbasi2011improved}.

   Denote by $S_t$ the set selected by MCSGreedy and by $OPT$ the optimal solution.
   Because $c(e) \le 1/m$ for all $e \in \ground$, we have $|S_t| = m$.
   It is easy to see that $OPT = \ground_1$.
   Therefore we have $f(OPT) = 1$.

   We take sufficiently large $m$ so that
   \begin{equation*}
      \frac{(m-3)(1 + \epsilon)}{m^2} + \frac{3}{m} < \alpha.
   \end{equation*}
   For $e\in \ground$ and $1 \le t \le T$,
   we denote by $N_{e, t}$ the number of times $e$ is selected in time steps $\tau = 1, \dots, t$.
   Then by Lemma 6 \parencite{abbasi2011improved}, UCB of $v(e)$ for $e\in \ground$ is given as
   \begin{math}
      \mu_{e, t} + \sigma_{e, t}
   \end{math}
   where
   $\mu_{e, t}$ is the mean of feedbacks for $v(e)$
   and $\sigma_{e, t}$ is given as follows:
   \begin{equation*}
      \sigma_{e, t} = \sqrt{
         \frac{1 + N_{e, t}}{N_{e, t}^2}
         \bl
         1 + 2 \ln \bl
         d (1 + N_{e, t}) ^{1/2}/\delta
         \br
         \br},
   \end{equation*}
   where $d = 2m$.
   We note that at each round MCSGreedy selects elements
   with the
   largest modified UCB $(\mu_{e, t} + \sigma_{e, t})/c(e)$ except the first three elements.
   Suppose that the modified UCB of an element in $\ground_1$
   is greater than that of an element in $\ground_2$,
   that is,
   \begin{equation*}
      m(\mu_{e_1, t} + \sigma_{e_1, t}) > m^2 (\mu_{e_2, t} + \sigma_{e_2, t}).
   \end{equation*}
   By Lemma 6 \parencite{abbasi2011improved}, we have with probability at least $1-\delta$,
   \begin{equation*}
      m (v(e_1) + 2 \sigma_{e_1, t})
      \ge m (\mu_{e_1, t} + \sigma_{e_1, t})
      > m^2 (\mu_{e_2, t} + \sigma_{e_2, t}) \ge m^2 v(e_2).
   \end{equation*}
   Thus we have
   \begin{equation*}
      2 m \sigma_{e_1, t} > \epsilon.
   \end{equation*}
   Therefore, there exists $N(\delta, \epsilon, m) \in \ZZ_{> 0}$ such that
   \begin{equation*}
      N_{e_1, t} < N(\delta, \epsilon, m).
   \end{equation*}
   We assume that $T$ is sufficiently large compared to $N(\delta, \epsilon, m)$.
   We see that
   MCSGreedy selects
   at least
   $(m-3)T - m N(\delta, \epsilon, m)$
   elements from $\ground_2$.
   Thus with probability at least $1-\delta$, we have
   \begin{equation*}
      \sum_{t = 1}^{T}f(S_t) \le \bl
      (m - 3) T - m N(\delta, \epsilon, m)
      \br \frac{1 + \epsilon}{m^2}
      + \bl 3T + m N(\delta, \epsilon, m)
      \br \frac{1}{m}.
   \end{equation*}
   Because
   \begin{equation*}
      \alpha \sum_{t = 1}^T f(OPT) = \alpha T,
   \end{equation*}
   we have our assertion.
\end{proof}

\section{PARAMETERS IN THE EXPERIMENTS}
Because this is the linear kernel case,
by Theorem 5 \parencite{srinivas2010gaussian}, we have
$\gamma_t = O(k \ln t)$.
Thus we modify the definition of $\beta_t$ as
\begin{math}
   \beta_t = B + R_1 \sqrt{R_2 k \ln m_t + 1 + \ln(1/\delta)},
\end{math}
where $B, R_1$, and $R_2$ parameters.
For all datasets, we take $\nu = 0.01$ and $\nu' = 1.0$.
We take $\varepsilon = 0.3$, $\varepsilon = 1.0$, and $\varepsilon = 0.1$
for the news article recommendation dataset,
MovieLens100k, and the Million Song Dataset respectively.
We tune other parameters by using different users.
Explicitly, we take $\beta = 0.01, R_1 = 0.1, R_2 = 1.0, \lambda = 0.1$
for the news article recommendation example,
$\beta = 0.01, R_1 = 0.1, R_2 = 1.0, \lambda = 1.0$
for MovieLens100k, and
$\beta = 0.01, R_1 = 0.01, R_2 = 1.0, \lambda = 3.0$
for the Million Song Dataset.


\end{document}